 \title{Multitask Learning with No Regret:\\ from Improved Confidence Bounds to Active Learning}
\author{%
  Pier Giuseppe Sessa\thanks{equal contribution} \\
  %Dept. of Computer Science\\
  ETH Z\"urich\\\texttt{piergiuseppe.sessa@inf.ethz.ch}
  % examples of more authors
    \And
    Pierre Laforgue$^*$ \\
  %Dept.\\
  Università degli Studi di Milano
\\\texttt{pierre.laforgue@unimi.it}
  \newline
  \And 
  Nicolò Cesa-Bianchi\\
    Università degli Studi di Milano
\\
  \texttt{nicolo.cesa-bianchi@unimi.it}
  \And
  Andreas Krause \\
  %Dept. of Computer Science\\
  ETH Z\"urich\\
  \texttt{krausea@ethz.ch}
}
\begin{document}

\maketitle

\begin{abstract}
Multitask learning is a powerful framework that enables one to simultaneously learn multiple related tasks by sharing information between them.
Quantifying uncertainty in the estimated tasks is of pivotal importance for many downstream applications, such as online or active learning.
In this work, we provide novel multitask confidence intervals in the challenging agnostic setting, i.e., when neither the similarity between tasks nor the tasks' features are available to the learner.
The obtained intervals do not require i.i.d.\ data and can be directly applied to bound the regret in online learning.
Through a refined analysis of the multitask information gain, we obtain new regret guarantees that, depending on a task similarity parameter, can significantly improve over treating tasks independently.
We further propose a novel online learning algorithm that achieves such improved regret without knowing this parameter in advance, i.e., automatically adapting to task similarity.
As a second key application of our results, we introduce a novel multitask active learning setup where several tasks must be simultaneously optimized, but only one of them can be queried for feedback by the learner at each round.
For this problem, we design a no-regret algorithm that uses our confidence intervals to decide which task should be queried.
Finally, we empirically validate our bounds and algorithms on synthetic and real-world (drug discovery) data.\looseness=-1
\end{abstract}

\section{Introduction}

In many real-world applications, one often faces multiple related tasks to be solved sequentially or simultaneously. 
The goal of multitask learning (MTL)~\cite{caruana1997multitask} is to leverage the similarities across the tasks to obtain more accurate and robust models. Indeed, by jointly learning multiple tasks, MTL can exploit their statistical dependencies, yielding better generalization and faster learning than treating each task independently. MTL has gained significant attention in recent years, as it has been shown to be effective in a wide range of applications, including natural language processing, computer vision, federated learning, and drug discovery, see e.g.,~\cite{collobert2008unified, liu2019end,kairouz2021advances,widmer2010inferring}.

A very natural model for learning across multiple tasks is the agnostic multitask (MT) regression approach of~\cite{evgeniou2005learning}. This utilizes a multitask kernel that can interpolate between running $N$ (number of tasks) independent regressions, and regressing all tasks to their common average, depending on a tunable parameter. Notably, such a kernel does not require any knowledge neither about tasks' features nor about their similarity, thus finding good application in several domains. For instance, \citet{cavallanti2010linear} study it for online classification, and \citet{cesa2013gang} for online convex optimization.\looseness-1

However, it is much less understood how to \emph{quantify the uncertainty} of such MT regression, i.e., assessing confidence in the estimated tasks. In particular, as also outlined by~\cite{evgeniou2005learning} as an open problem, it is important to assess their generalization error as a function of the kernel parameter. Appropriately characterizing these confidence intervals is indeed of crucial importance for a whole set of downstream applications. More concretely, multitask confidence intervals are used in online learning to inform the next decision to be made~\cite{cesa2013gang}. In active learning---as we show next---these intervals are pivotal to deciding the most informative task to query. 
%Paragraph 3: However, how do we \emph{quantify the uncertainty} of such multitask regression? Although this was not needed in online regression/classification, uncertainty quanfication is very important, e.g., in online learning (studied in GoB paper, but naively) but also in active learning, as we will show. Bounding the generalization error was listed as a future open problem also by~\cite{evgeniou2005learning}.

%In typical settings: you interact with a task at a time (online learning). However, in some important applications, you care about all tasks and need to decide which tasks to query to gather information efficiently.

In this work, we study the agnostic MT regression setup of~\cite{evgeniou2005learning}, and provide \emph{new multitask confidence intervals} (see Figure~\ref{fig:new_intervals} for a visualization) for the full range of the kernel parameter. Our intervals hold in the so-called adaptive setting, i.e., without requiring i.i.d.~data, and are \emph{tighter up to a $\sqrt{N}$ factor} than the naive ones employed in~\cite{cesa2013gang}. Moreover, we provide the first bounds for the information gain of MT regression and utilize them---together with the derived intervals---to obtain \emph{tighter online learning guarantees}.
The latter depend on a task similarity parameter and can significantly improve over treating tasks independently. Additionally, we propose an adaptive no-regret algorithm that exploits task similarity without knowing this parameter in advance. Finally, we consider a novel multitask \emph{active learning} setup, where tasks should be simultaneously optimized but only one of them can be queried at each round. We show that the newly derived intervals are also crucial in such a setting, and provide a new algorithm that ensures sublinear regret. We demonstrate the superiority of the derived intervals over previously proposed algorithms on synthetic as well as real-world drug discovery tasks.\looseness-1

\begin{figure}[t]
    %\centering
    \includegraphics[width=.89\textwidth]{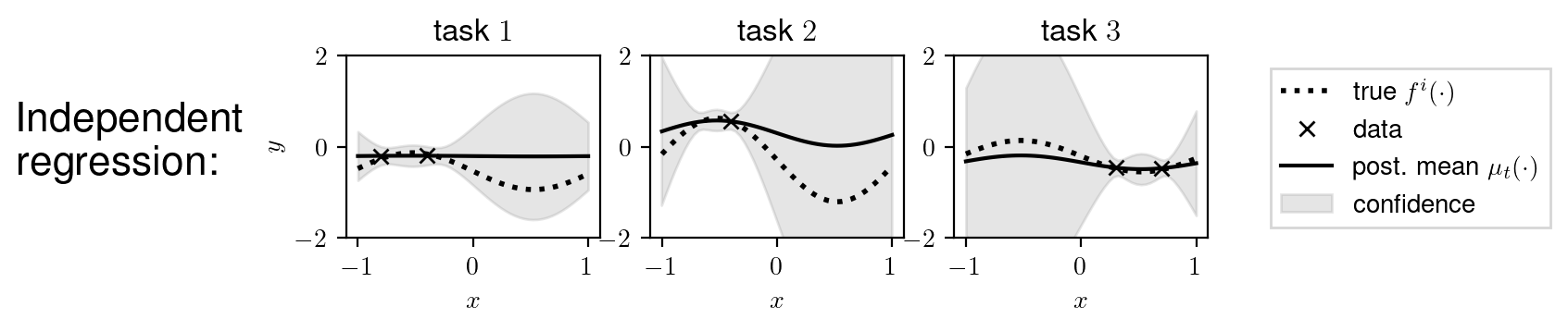}

    \vspace{-1em}
    \includegraphics[width=.99\textwidth]{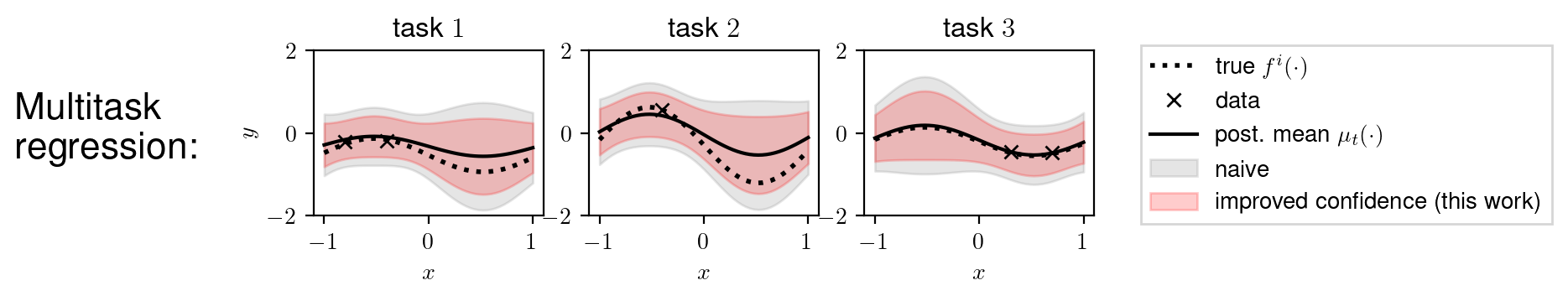}
    \vspace{-1em}
    \caption{Independent vs. Multitask (MT) regression. MT regression leverages data coming from multiple related tasks and can yield more accurate and more confident estimates. In this work, we show naive confidence intervals are overly conservative and provide improved ones (shaded in red).}
    \label{fig:new_intervals}
    \vspace{-0.5cm}
\end{figure}

\textbf{Related work.} The agnostic MT regression approach of~\cite{evgeniou2005learning} reduces the learning of $N$ tasks to a single regression problem, as a function of the MT kernel parameter. When combined with support vector machines, it was shown effective in a series of classification problems~\cite{evgeniou2005learning, sheldon2008graphical}, and since then was studied in various further settings. \citet{cavallanti2010linear}, e.g., analyze mistake bounds for online MT classification algorithms as a function of the kernel parameter. \citet{cesa2013gang}, instead, utilize the MT kernel to prove regret bounds in online MT learning with bandit feedback. Inspired by this, \citep{ciliberto2015learning} focuses on learning more general kernel structures from data. An important question not addressed by previous work, though, is how to properly quantify the uncertainty of the obtained task estimates. This problem is well-understood in single-task learning (e.g.,~\cite{abbasi2011improved, srinivas2010gaussian, chowdhury2017}) but remains largely unexplored in MT domains. As shown in \citep{cesa2013gang}, MT confidence intervals can in principle be obtained by a naive application of the single-task guarantees of~\cite{abbasi2011improved}. However, as we show in~\Cref{sec:mt_setting}, the so-obtained intervals are extremely conservative and---as a result---can hamper the MT learning performance. Our intervals are tighter by a factor up to $\sqrt{N}$ w.r.t. the naive ones from~\cite{cesa2013gang}, yielding novel online learning regret guarantees which can provably improve over treating tasks independently.\looseness-1

Compared to MT online learning~\cite{cavallanti2010linear, cesa2013gang}, where a single task is revealed to the learner at each round, a series of works have considered learning multiple tasks \emph{simultaneously}, i.e., taking a decision for each one of them. \citet{dekel2007online}, e.g., propose the use of a shared loss function to account for tasks' relatedness, \citet{lugosi2009} studies the computational tractability of taking multiple actions with joint constraints, while \citet{cavallanti2010linear} propose a matrix-based extension of the multitask Perceptron algorithm.
In all of these works, however, the learner receives feedback from \emph{all} the tasks. In~\Cref{sec:active_learning}, instead, we focus on the challenging setup where only one task can be queried by the learner at each round. Hence, in addition to choosing good actions, the learner faces the \emph{active learning} task of assessing the most informative feedback, in order to achieve sublinear regret. Perhaps most related to ours, is the offline contextual Bayesian optimization setup of~\cite{char2019offline,li2022near} where the goal is to compute the best strategy for each context (task) with minimal function interactions. However, unlike us, \cite{char2019offline,li2022near} do not guarantee sublinear regret but provide only sample-complexity results.

Finally, we note that MT confidence intervals and regret guarantees were also recently derived by~\citep{chowdhury2021no}, albeit in a different setup and regression model. Indeed, the authors of~\citep{chowdhury2021no} focus on multi-objective optimization where they ought to learn \emph{multi-output} functions (each output corresponding to a task) using matrix-valued kernels. Although their setup can be related to ours, it crucially requires all tasks to be observed at each round, leading to different challenges than ours, see \Cref{app:intuition_regression} for details.

\textbf{Notation.} We use $[N] \coloneqq \{1, \ldots, N\}$, $\mathbbm{1}_N$ for the vector in $\mathbb{R}^N$ full of ones. Norms of functions are always taken w.r.t. the natural RKHS norm, so that we drop the subscript for simplicity of writing.

\section{Improved Confidence Intervals for Multitask Kernel Regression}
\label{sec:mt_setting}

In this section, we introduce the MT kernel regression setting, and prove our refined confidence intervals.
Of independent interest, these results are then leveraged in \Cref{sec:online,sec:active_learning} to derive novel regret bounds for online and active multitask learning.
All proofs are deferred to the Appendix.

%%%%%%%%%%%%%%%%%%%%%%%
%                     %
%   PROBLEM SETTING   %
%                     %
%%%%%%%%%%%%%%%%%%%%%%%

\subsection{Multitask Kernel Regression}
\label{sec:mt_regression}

Given an input space $\X$, equipped with a (single task) scalar kernel $k_\X \colon \X \times \X \rightarrow \mathbb{R}$, the goal
of MT kernel regression is to jointly learn $N$ different functions $f_1, \ldots, f_N$ from $\X$ to $\mathbb{R}$, all belonging to $\H_{k_\X}$, the RKHS associated to $k_\X$.
To do so, the learner is given a set of triplets $\{(i_s, x_s), y_s\}_{s=1}^t$ consisting of a measured task index $i_s \in [N]$, a measured point $x_s \in \X$, and a noisy measurement $y_s = f_{i_s}(x_s) + \xi_s$, where $\xi_s$ is an independent random variable to be specified later.
We can further define the multitask function $\fmt \colon [N] \times \X \rightarrow \mathbb{R}$ such that $\fmt(i, \cdot) = f_i$, and the multitask kernel
\begin{equation}\label{eq:mt_kernel}
k\big((i,x), (i', x')\big) =  k_\T(i, i') \cdot k_\X(x, x')\,,
\end{equation}
where $k_\T$ is a kernel on the tasks.
In certain cases, the latter might be given as input to the learner, either under the form of the task Gram matrix, or via task features and an assumed (e.g., linear) similarity~\cite{krause2011contextual}.
However, in practice such information is usually not accessible to the learner.
In such a case, a standard \emph{agnostic} approach to MT regression~\citep{cavallanti2010linear,cesa2013gang,ciliberto2015learning,evgeniou2005learning,sheldon2008graphical} then consists in leveraging a parameterized task kernel of the form
\begin{equation}\label{eq:A_clique}
k_\T(i, i') = \big[K_\text{task}(b)\big]_{ii'}\,, \quad  \text{with} \quad K_\text{task}(b) = \frac{1}{1+b} I_N + \frac{b}{1+b} \frac{\mathbbm{1}_N\mathbbm{1}_N^\top}{N} \in \mathbb{R}^{N \times N}\,.
\end{equation}
Intuitively, parameter $b \ge 0$ governs how similar the tasks are thought to be.
When $b = 0$, we have $K_\text{task}(b) = I_N$, such that $k_\T(i,i') = \delta_{ii'}$, and the tasks are considered to be independent.
When $b$ goes to $+\infty$, we have $K_\text{task}(b) = \mathbbm{1}_N\mathbbm{1}_N^\top/N$, and all tasks are considered to be one single common task.
Any choice of $b \in (0, +\infty)$ corresponds to a tradeoff between these two regimes.
We make this intuition explicit in \Cref{prop:extreme_cases} (\Cref{app:intuition_regression}).
Note that all quantities depending on the kernel do by definition depend on $b$.
We use the notation $\vert\, b$ to make this dependence explicit when relevant.

Given a history of measurements $\{(i_s, x_s), y_s\}_{s=1}^t$, one may then estimate $\fmt$, or equivalently the $\{f_i\}_{i=1}^N$, by standard kernel Ridge regression using the MT kernel $k$.
One obtains the estimates
\begin{align}
\mu_{t}(i, x\,\vert\,b) &= \bm{k}_t(i,x)^\top\big(K_t + \lambda I_t \big)^{-1} \bm{y}_{1:t}\,, \label{eq:kernel_regression_mean}\\
\sigma_{t}^2(i, x\,\vert\,b) &= k\big((i,x),(i,x)\big) -\bm{k}_t(i,x)^\top \big(K_t + \lambda I_t \big)^{-1} \bm{k}_t(i,x) \,,\label{eq:kernel_regression_var}
\end{align}
where $\bm{k}_t(i,x) = \big[k\big((i_s, x_s), (i,x)\big)\big]_{s=1}^t$, $K_t = \big[k \big((i_s, x_s), (i_{s'}, x_{s'})\big)\big]_{s,s'=1}^t$, $\bm{y}_{1:t}= \big[y_s \big]_{s=1}^t$, and $\lambda > 0$ is some regularization parameter. Functions $\mu_t$ and $\sigma_t^2$ can be interpreted as the posterior mean and variance of a corresponding Gaussian Process model, see \citep{srinivas2010gaussian,chowdhury2017}. In the next section, we will utilize $\mu_{t}$ and $\sigma_{t}^2$ to construct high-probability confidence intervals for the multitask function $\fmt$.

\paragraph{Information gain.}
An important quantity when analyzing (multitask) kernel regression is the so-called \emph{(multitask) information gain}: 
\begin{equation*}
\gamma_T^\text{mt}(b) = \frac{1}{2} \ln \big| I_T + \lambda^{-1}K_T \big|\,.
\end{equation*}
It can be interpreted as the reduction in uncertainty about $\fmt$ after having observed a given set of $T$ datapoints.
Similarly to single-task setups \cite{srinivas2010gaussian,chowdhury2017}, we use $\gamma^\text{mt}_T$ in the next sections to characterize our confidence intervals and regret bounds.
Note that $\gamma^\text{mt}_T$ depends on the multitask kernel through $K_T$, and hence on $b$.
In \Cref{sec:online}, we exploit the properties of our multitask kernel to obtain a sharper control over $\gamma^\text{mt}_T$, which is then fundamental to derive improved regret bounds.%\looseness=-1

%%%%%%%%%%%%%%%%%%%%%%%%%
%                       %
%   CONFIDENCE BOUNDS   %
%                       %
%%%%%%%%%%%%%%%%%%%%%%%%%

\subsection{Improved Confidence Intervals}

In this section, we utilize the regression estimates obtained in \Cref{eq:kernel_regression_mean,eq:kernel_regression_var} to construct high probability confidence intervals around the unknown multitask function $\fmt$.
First, we assume that $\|f_i\| \leq B$ for all $i \in \uptoN$, as it is standard in single-task regression.
Moreover, let $f_\text{avg} = (1/N)\sum_{i=1}^N f_i~$ be the average task function, and define
%
%\vspace{-0.5em}
\begin{equation}\label{eq:def_eps}
   \epsilon = \max_i \, \|f_i - f_\text{avg}\|/B\,.
\end{equation}

   \vspace{-0.3cm}
Note that by definition $\epsilon \in [0, 2]$.
Quantity $\epsilon$ measures how much individual tasks deviate from the average task $f_\text{avg}$.
The smaller $\epsilon$, the more similar the tasks are, the limit case being that all tasks are equal, attained at $\epsilon=0$.
At the other extreme, when $\epsilon \gg 0$ tasks are highly distant and ought to be learned independently.
The deviation $\epsilon$ plays a crucial role in the subsequent analysis.

\textbf{A naive confidence interval.}
As discussed in \cite{cesa2013gang}, it is possible to construct the multitask feature map $\tpsi$ associated to $k$.
One may then rewrite $\fmt(i, x) = \langle \tilde{f}, \tpsi(i, x)\rangle$, where $\tilde{f}$ is a transformed version of $\fmt$ which satisfies $\big\|\tilde{f}\big\| \le B\sqrt{N(1 + b\epsilon^2)}$, see \Cref{app:intuition_regression,apx:interval} for details.
MT regression thus boils down to single-task regression, over the modified features $\tilde{\psi}(i,x)$, and with target function~$\tilde{f}$.
One can then employ well-known linear regression results to obtain confidence intervals for $\fmt$.
Using \citep[Theorem~3.11, Remark~3.13]{abbasi2012online} and the definition of $\gamma_t^\text{mt}(b)$, with probability $1-\delta$ we have that for all $t \in \mathbb{N}$, $i \in \uptoN$, and $x \in \X$ it holds $\big|\mu_t(i,x\,|\,b) - \fmt(i,x)\big| \le \beta_t^\text{naive}(b) \cdot \sigma_t(i,x\,|\,b)$, where\looseness-1
\[
\beta_t^\text{naive}(b)  = B\sqrt{N(1 + b\epsilon^2)} + \lambda^{-1/2}\sqrt{2\big(\gamma_t^\text{mt}(b) + \ln(1/\delta)\big)}\,.
\]
Note that the above confidence interval was already established in \cite[Theorem~1]{cesa2013gang}.
As expected, it depends on $B$, $N$, $b$, and in a decreasing fashion with respect to $\epsilon$.
However, we argue that the above naive choice can be \emph{extremely conservative}.
Indeed, when $b = 0$, MT regression treats tasks independently, see \Cref{prop:extreme_cases}.
Hence, a valid confidence width from~\citep{abbasi2011improved,abbasi2012online,chowdhury2017} is 
$\mathcal{O}\big(B + \sqrt{\gamma_{t}^\text{st}}\big)$, where $\gamma^\text{st}_{t}$ is the single-task  maximum information gain.
Instead, noting that $\gamma^\text{mt}_t(0) = \mathcal{O}\big(N\gamma_t^\text{st}\big)$, see \Cref{prop:bound_info_gain}, the naive choice provides $\beta_t^\text{naive}(0) = \sqrt{N}\cdot \mathcal{O}\big(B + \sqrt{\gamma^\text{st}_{t}}\big)$, which is larger by a factor $\sqrt{N}$.
A similar suboptimality gap of $\sqrt{N}$ can also be proven when $b$ tends to $+\infty$.
Motivated by the above observation, we derive a novel confidence width that is less conservative than $\beta_t^\text{naive}(b)$ for the whole range of possible kernel parameters $b$.\looseness=-1
\smallskip

\begin{figure}[t]
    \centering
    \pgfplotsset{every axis/.append style={
                    label style={font=\normalsize},
                    tick label style={font=\tiny}  
                    }}
\def\N{20}
\def\B{1}
\def\eps{0.4}
\def\t{4}
\begin{tikzpicture}[scale=1,
  % as the function is used more than once, make a shorthand
  declare function={
lambda(\x)=(1+\x*N)/(N*(1+\x));
naive(\x)=\B*sqrt(\N + \x*\N*\eps^2);
bound_first(\x) = sqrt((1+\x*\N)/(1+\x))*(\B + \x*\B*\eps);
bound_second(\x) = \B*sqrt(1/(1+\x)*(1 + \x*\eps)^2 + 2*\x*\N/(1+\x)*(1 + ((1+\x*\eps)^2)/((\x+\N)^2)*\t^2));
improved(\x) = min(min(bound_first(\x), bound_second(\x)), naive(\x));
}
]
 \begin{axis}[
 height=0.3\textwidth,
 width=0.5\textwidth,
   scaled ticks=false,
   xmin=-3,
   xmax=80,
   ymin=0,
   ymax=17,
   xlabel=$b$,
   ylabel= \large $\beta_t$\normalsize$(b)$,
    ytick={\B,\B*sqrt(\N)},
    yticklabels = {\small $1 + \mathcal{O}\big(\sqrt{\gamma_t^\text{st}}\big)$ , \small$\sqrt{N} + \mathcal{O}\big(\sqrt{N\gamma^\text{st}_t}\big)$},
    %ymajorgrids=true,
   % add a name to the axis, used to position the second one
   x label style={at={(axis description cs:0.5,-0.1)},anchor=north},
    y label style={at={(axis description cs:-0.1,.7)},rotate=-90,anchor=south},
   legend pos = south east,
   legend cell align={left},
   name=ax1
 ]

 \addplot[domain=0:100,samples=100, gray, ultra thick] {naive(x)};
 \addplot[domain=0:100, samples=100, red, ultra thick,smooth] {improved(x)};
 \addplot[dashed, domain=0:100,samples=100, black, ultra thick] {bound_first(x)};
\addplot[dotted,domain=0:100,samples=100, black, ultra thick] {bound_second(x)};
   \addlegendentry{naive}
   \addlegendentry{this work}
  % define coordinates at the bottom left and top left of rectangle
  \coordinate (c1) at (axis cs:-.5,.5);
  \coordinate (c2) at (axis cs:3,7);
  % draw a rectangle
  \draw[ultra thin] (c1) rectangle (axis cs:3,7);
   \node[rotate=5] at (axis cs: 60, 12.1) {\small $\mathcal{O}(\epsilon\sqrt{b})$};
\node[rotate=15] at (axis cs: 55, 15.4) {\small $\mathcal{O}(\epsilon\sqrt{bN})$};
\end{axis}

\begin{axis}[
 height=0.25\textwidth,
 width=0.35\textwidth,
   name=ax2,
   scaled ticks=false,
   xmin=-.05,xmax=3,
   ymin=+.5,ymax=7,
   xlabel=$b$,
   %ylabel=$\beta_t^\text{bias}$,
   % place second axis relative to first one
   % anchor is south west
   at={($(ax1.south east)+(1.5cm,0.07cm)$)},
   % turn on grid
   %ymajorgrids=true,
   %grid style={help lines,dashed},
   ytick={\B,\B*sqrt(\N)},
    yticklabels = {\scriptsize  , \scriptsize },
    x label style={at={(axis description cs:0.5,-0.08)},anchor=north}
   % to avoid clipping of a...e nodes
   %clip=false
 ]
 
 \addplot[domain=0:3, gray, ultra thick] {naive(x)};
   \addplot[domain=0:3, red, ultra thick] {improved(x)};
 \addplot[dashed, domain=0:3, black, ultra thick] {bound_first(x)};
  \addplot[dotted,domain=0:3, black, ultra thick] {bound_second(x)};
\end{axis}
% draw dashed lines from rectangle in first axis to corners of second
\draw [ultra thin] (c1) -- (ax2.south west);
\draw [ultra thin] (c2) -- (ax2.north west);
\end{tikzpicture}
     \vspace{-0.2cm}
    \caption{Novel multi-task confidence width $\beta^\text{new}_t(b)$ (see Theorem~\ref{thm:multitask_conf_intervals}) visualized for large and small values of $b$.
    It improves over the naive width $\beta_t^\text{naive}(b)$ by a factor of $\sqrt{N}$ at $b=0$ and as $b\rightarrow +\infty$. Problem parameters were set to $B=1,\epsilon=0.4, N=20,t=4$, and $\gamma_t^\text{mt}(b)=\gamma^\text{st}_t=0$ for all $b$.}
    \label{fig:improved_confidence}
    \vspace{-0.1cm}
\end{figure}
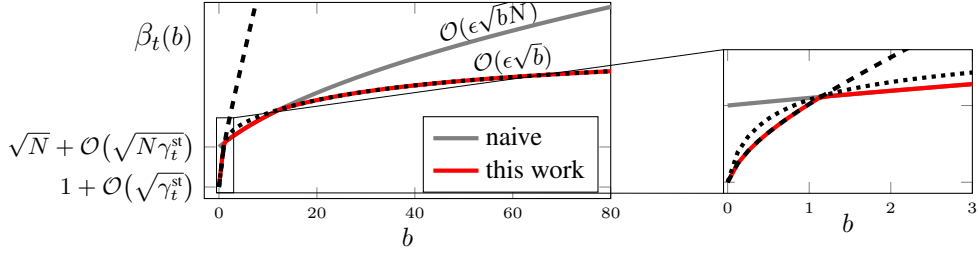

\begin{restatable}[Multitask confidence intervals]{thm}{thminterval}\label{thm:multitask_conf_intervals}
Let $\fmt\colon[N]\times \X \rightarrow \mathbb{R}$ such that for all $i\in [N]$, $f_i \coloneqq \fmt(i,\cdot)$ belongs to the RKHS associated to $k_\X$ and $\|f_i\| \leq B$.
Moreover, let $\mu_t$  and $\sigma_t$ be the regression estimates of \Cref{eq:kernel_regression_mean,eq:kernel_regression_var} with task kernel $k_\T(i,j) =  [K_\textnormal{task}(b)]_{ij}$, parameter $\lambda \in [1/(1+b), 1]$, and noise $\{\xi_\tau\}_{\tau=1}^t$ i.i.d.\ $1$-sub-Gaussian.
Then, with probability at least $1-2\delta$,
\begin{equation*}
    \big|\mu_t(i, x \,|\, b) - \fmt(i,x)\big| \leq \beta_t^\textnormal{new}(b) \cdot \sigma_t(i,x\,|\,b), \qquad \forall\, t \in \mathbb{N}, i \in [N], x \in \mathcal{X}~,
\end{equation*}
\vspace{-1em}
\begin{align*}
\text{where} \quad \beta_t^\textnormal{new}(b) &= \min\Big\{ \beta_t^\textnormal{naive}(b),\, \beta_t^\textnormal{small-$b$}(b),\, \beta_t^\textnormal{large-$b$}(b) \Big\}\,,\\
\beta_t^\textnormal{small-$b$}(b) &= B(1+b\epsilon)\sqrt{\frac{1+bN}{1+b}} + \lambda^{-1/2} \sqrt{2(1+bN)\big(\gamma_t^\textnormal{st} + \ln(N/\delta)\big)}\,,\\
\beta_t^\textnormal{large-$b$}(b) &=  B \sqrt{\frac{(1+b\epsilon)^2}{1+b} +  \frac{2bN}{1+b} + \frac{2b(1+b\epsilon)^2}{N\lambda^2(1+b)^3}\,t^2} + \lambda^{-1/2} \sqrt{2\big(\gamma_t^\textnormal{mt}(b) + \ln(1/\delta)\big)}\,.
\end{align*}
\end{restatable}

The obtained improved confidence width $\beta_t^\textnormal{new}(b)$ is the minimum between three confidence widths, see \Cref{fig:improved_confidence}.
The first one is the naive one $\beta_t^\textnormal{naive}(b)$, obtained by standard arguments as outlined above, while $\beta_t^\textnormal{small-$b$}(b)$ and $\beta_t^\textnormal{large-$b$}(b)$ (dashed and dotted lines in Figure~\ref{fig:improved_confidence}) are novel and useful for small and large values of $b$, respectively.
%
%Our approach thus improves over the naive one \emph{by definition}.
%
Indeed, note that we have $\beta^\text{small-$b$}_t(b) \xrightarrow[]{b \rightarrow 0} \mathcal{O}\big(B+\sqrt{\gamma_t^\text{st}}\big)$, which is the expected single-task confidence width and $\sqrt{N}$ smaller than $\beta_t^\text{naive}(0)$.
Similarly, as $b$ goes to $+\infty$ we have $\beta_t^\text{large-$b$}(b) \overset{b \rightarrow +\infty}{\sim} \mathcal{O}\big(B \sqrt{b \epsilon^2 + 2N + 2\epsilon^2 t^2/N}\big) \overset{b \rightarrow +\infty}{\sim} \mathcal{O}\big(\epsilon B \sqrt{b}\big)$, while $\beta^\text{naive}_t(b) \overset{b \rightarrow +\infty}{\sim} \mathcal{O}\big(\epsilon B \sqrt{Nb} \big)$.
The obtained confidence width is therefore always smaller than the naive one, but also tighter by a factor $\sqrt{N}$ for the extreme choices $b=0$ and $b=+\infty$.

From a technical viewpoint, $\beta_t^\textnormal{small-$b$}$ and $\beta_t^\textnormal{large-$b$}$ are obtained by viewing MT regression as a single-task regression over the inflated features $\tpsi(i, x)$, as also done in~\cite{cesa2013gang}.
However, unlike~\cite{cesa2013gang}, we explicitly leverage the expressions of $\tpsi(i, x\,|\,b)$ and $K_\text{task}(b)$ as functions of $b$, see in particular \Cref{lem:SM_kronecker}, where the structure of $K_\text{task}$ is critical.
%we find its extension to more generic kernels highly nontrivial.
%
Moreover, we note that refined widths can be obtained if one has access to task-specific constants $B_i$ and $\epsilon_i$.
For simplicity of exposition, we focus on uniform (over tasks) $B$ and $\epsilon$.  Also, a tighter data-dependent $\beta_t^\textnormal{large-$b$} $ can be utilized as outlined in~\Cref{app:improved_width}. Finally, we remark that the obtained multitask intervals do not require i.i.d.\ data and thus apply to the \emph{adaptive design} setting where data are, e.g., sequentially acquired by the learner, as shown in the next section.\looseness=-1

% leveraging the structure of matrix $K_\text{task}(b)$, which induces kernel $k_\T$ (see~\Cref{eq:A_clique}), to make the bias and variance errors explicit.
%
% Indeed, $\tpsi(i, x)$ is nearly sparse for small $b$, while it becomes constant as $b \rightarrow \infty$.
%
%

\section{New Guarantees for Multitask Online Learning}\label{sec:online}

In this section, we show how the improved confidence interval established in \Cref{thm:multitask_conf_intervals} can be used to derive sharp regret guarantees for multitask online learning.
To do so, we also prove novel bounds for the multitask information gain $\gamma_T^\text{mt}(b)$.
For $t=1, 2, \ldots$ the learning protocol is as follows: nature reveals task index $i_t \in [N]$; the learner chooses strategy $x_t \in \X$ and pays $\fmt(i_t, x_t)$; the learner observes the noisy feedback $y_t = \fmt(i_t, x_t) + \xi_t$.
The goal is to minimize for any horizon $T$ the multitask regret
\begin{equation}
\label{eq:regret_online_learning}
\Rmt(T) = \sum_{t=1}^T \max_{x\in \mathcal{X}} \fmt(i_t, x) - \sum_{t=1}^T \fmt(i_t, x_t)\,.
\end{equation}
In the next subsection, we provide a generic algorithm to minimize \eqref{eq:regret_online_learning}.
In particular, we show that naive choices of parameters allow to recover previous approaches with their guarantees, while using the refined confidence width $\beta_t^\text{new}(b)$ derived in \Cref{thm:multitask_conf_intervals} yields significant improvements.

\subsection{Algorithm and regret guarantees}
\label{subsec:mt_algo}

\begin{figure}[t]
\begin{minipage}[t]{0.55\textwidth}
\begin{algorithm}[H]
\caption{\texttt{MT-UCB}}\label{alg:mt_ucb}
\begin{algorithmic}[1]
\REQUIRE Domain $\X$, kernel $k_\X$, number of tasks $N$,\\
\hspace{0.8cm} width functions $\{\beta_t\}_{t \in \mathbb{N}}$\,, parameters $b, \lambda$.
\FOR{t = 1, 2, \ldots}{}
\STATE Observe $i_t$,
\STATE Play $x_t = \argmax_{x \in \X} \, \text{ucb}_{t-1}(i_t, x\,|\,b)$,
\STATE Observe $y_t = \fmt(i_t, x_t) + \epsilon_t$,
\STATE Update $\text{ucb}_t(\cdot, \cdot \,|\,b)$ based on \eqref{eq:kernel_regression_mean},\eqref{eq:kernel_regression_var}, and \eqref{eq:ucb}.
\ENDFOR
\end{algorithmic}
\end{algorithm}
\end{minipage}
\hfill
\renewcommand{\arraystretch}{1.5}
\begin{minipage}[t]{0.41\textwidth}
\begin{table}[H]
\centering
\begin{tabular}{|c|c|c|c|}\hline
{\bf Algorithm} & $\beta_t$ & $b$ & $\lambda$ \\\hline
\texttt{IGP-UCB} \cite{chowdhury2017} & $\beta_t^\text{small-$b$}$ & $0$ & $1$ \\\hline
\texttt{GoB.Lin} \cite{cesa2013gang}  & $\beta_t^\text{naive}$     & $b$ & $1$ \\\hline
This work                             & $\beta_t^\text{new}$       & $b$ & $\frac{N+b}{N+bN}$ \\\hline
\end{tabular}
\vspace{0.3cm}
\caption{Recovering previous works by appropriate choices of $\beta_t$, $b$, and $\lambda$.}
\label{tab:equivalence}
\end{table}
\end{minipage}
\vspace{-0.2cm}
\end{figure}

In line with the online learning literature, our approach is based on the multitask Upper Confidence Bound, defined for any $t \in \mathbb{N}$ as
\begin{equation}\label{eq:ucb}
\text{ucb}_t(i,x\,|\,b) = \mu_t\big(i,x \,|\, b\big) + \beta_t(b) \cdot \sigma_t\big(i,x\,|\,b\big)\,.
\end{equation}
Here $\beta_t \colon \mathbb{R}_+ \rightarrow \mathbb{R}_+$ is a function which assigns a confidence width $\beta_t(b)$ to each kernel parameter $b$.
We consider the general strategy \texttt{MT-UCB} (see \Cref{alg:mt_ucb}) which, at each round $t$ selects $x_t = \argmax_{x \in \X} \text{ucb}_{t-1}(i_t, x\,|\,b)$.
As summarized in \Cref{tab:equivalence}, both the strategy that runs $N$ independent instances of \texttt{IGP-UCB} (one for each task), and \texttt{GoB.Lin} from \cite{cesa2013gang} are particular cases of \texttt{MT-UCB}.
Importantly, whenever $\beta_t(b)$ is set such that $[\mu_t(\cdot,\cdot\,|\,b) \pm \beta_t(b) \cdot \sigma_t(\cdot,\cdot\,|\,b)]$ is a valid confidence interval for $\fmt(\cdot,\cdot)$, the regret of \texttt{MT-UCB} can be controlled through the following lemma.\looseness-1
\smallskip

\begin{restatable}{lem}{lemgeneric}\label{lem:generic_bound}
Suppose that $\lambda \ge (N+b)/(N+bN)$, and that for all tasks $i$, point $x$, and time $t$, we have $\fmt(i, x\,|\,b) \in [\,\mu_t(i,x\,|\,b) \pm \beta_t(b) \cdot \sigma_t(i,x\,|\,b)\,]$.
Then, the multitask regret of \textnormal{\texttt{MT-UCB}} satisfies
\[
\Rmt(T) \le 4\,\beta_T(b) \sqrt{\lambda\,T\gamma_T^\textnormal{mt}(b)}\,.
\]
\end{restatable}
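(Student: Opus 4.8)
The plan is to run the standard optimism-based (GP-UCB) regret decomposition, and then convert the accumulated posterior standard deviations into the multitask information gain $\gamma_T^\textnormal{mt}(b)$ via the log-determinant telescoping, with the hypothesis $\lambda \ge (N+b)/(N+bN)$ entering precisely to cap the posterior variance. Throughout I would work on the event (assumed by the lemma) that the interval $[\mu_t \pm \beta_t\sigma_t]$ contains $\fmt$, so the argument is deterministic given that event.

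First I would bound the instantaneous regret. Write $x_t^\star \in \argmax_{x\in\X}\fmt(i_t,x)$ and $r_t = \fmt(i_t,x_t^\star) - \fmt(i_t,x_t)$. Since the interval is valid, $\fmt(i_t,x_t^\star)\le \text{ucb}_{t-1}(i_t,x_t^\star\,|\,b)$, and since $x_t$ maximizes $\text{ucb}_{t-1}(i_t,\cdot\,|\,b)$ we have $\text{ucb}_{t-1}(i_t,x_t^\star\,|\,b)\le \text{ucb}_{t-1}(i_t,x_t\,|\,b)$; combining with the lower bound $\fmt(i_t,x_t)\ge \mu_{t-1}(i_t,x_t\,|\,b) - \beta_{t-1}(b)\sigma_{t-1}(i_t,x_t\,|\,b)$ yields $r_t\le 2\beta_{t-1}(b)\sigma_{t-1}(i_t,x_t\,|\,b)$. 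Using that $\beta_t(b)$ is non-decreasing in $t$ (each of the three widths in \Cref{thm:multitask_conf_intervals} is non-decreasing, since $\gamma_t^\textnormal{st}$ and $\gamma_t^\textnormal{mt}(b)$ are) and Cauchy–Schwarz, I then get $\Rmt(T)\le 2\beta_T(b)\sum_{t=1}^T\sigma_{t-1}(i_t,x_t\,|\,b)\le 2\beta_T(b)\sqrt{T\sum_{t=1}^T\sigma_{t-1}^2(i_t,x_t\,|\,b)}$.

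The crux is to show $\sum_{t=1}^T\sigma_{t-1}^2(i_t,x_t\,|\,b)\le \tfrac{2\lambda}{\ln 2}\,\gamma_T^\textnormal{mt}(b)$. Two ingredients are needed: (i) the exact identity $\gamma_T^\textnormal{mt}(b)=\tfrac12\sum_{t=1}^T\ln\big(1+\lambda^{-1}\sigma_{t-1}^2(i_t,x_t\,|\,b)\big)$ for the realized sequence $\{(i_t,x_t)\}$, which comes from recursively applying the matrix-determinant lemma to $\tfrac12\ln|I_T+\lambda^{-1}K_T|$; and (ii) the elementary inequality $u\le \tfrac{1}{\ln 2}\ln(1+u)$ for $u\in[0,1]$ (as $u\mapsto u/\ln(1+u)$ is increasing), applied to $u=\lambda^{-1}\sigma_{t-1}^2(i_t,x_t\,|\,b)$. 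The hard part is justifying $u\le 1$, and this is exactly where the hypothesis bites: from \Cref{eq:kernel_regression_var} the subtracted quadratic form is nonnegative, so $\sigma_{t-1}^2(i,x\,|\,b)\le k\big((i,x),(i,x)\big)=[K_\textnormal{task}(b)]_{ii}\,k_\X(x,x)$, and by the explicit diagonal $[K_\textnormal{task}(b)]_{ii}=\tfrac{1}{1+b}+\tfrac{b}{1+b}\tfrac1N=\tfrac{N+b}{N+bN}$ together with the normalization $k_\X(x,x)\le 1$, we obtain $\sigma_{t-1}^2\le \tfrac{N+b}{N+bN}$; hence $\lambda^{-1}\sigma_{t-1}^2\le 1$ precisely when $\lambda\ge (N+b)/(N+bN)$. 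Then $\sigma_{t-1}^2=\lambda\cdot\lambda^{-1}\sigma_{t-1}^2\le \tfrac{\lambda}{\ln 2}\ln(1+\lambda^{-1}\sigma_{t-1}^2)$, and summing with (i) gives the claimed bound.

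Putting the pieces together yields $\Rmt(T)\le 2\beta_T(b)\sqrt{\tfrac{2\lambda}{\ln 2}\,T\,\gamma_T^\textnormal{mt}(b)}=2\sqrt{2/\ln 2}\,\beta_T(b)\sqrt{\lambda\,T\,\gamma_T^\textnormal{mt}(b)}$, and since $2\sqrt{2/\ln 2}\approx 3.40\le 4$ the stated inequality $\Rmt(T)\le 4\beta_T(b)\sqrt{\lambda\,T\,\gamma_T^\textnormal{mt}(b)}$ follows. I expect the only genuinely delicate point to be this variance cap $\tfrac{N+b}{N+bN}$ and its link to the admissible range of $\lambda$ (which ties the regularization to the kernel parameter $b$); the remainder is the textbook UCB-to-information-gain chain.
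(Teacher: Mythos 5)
Your proposal is correct and follows essentially the same route as the paper's proof: the standard optimism decomposition, Cauchy--Schwarz, the observation that $\lambda \ge (N+b)/(N+bN) = [K_\textnormal{task}(b)]_{ii}$ caps the posterior variance so that $\lambda^{-1}\sigma_{t-1}^2 \le 1$, and the log-determinant telescoping to reach $\gamma_T^\textnormal{mt}(b)$. The only (immaterial) difference is that you use the sharper elementary inequality $u \le \ln(1+u)/\ln 2$ where the paper uses $u \le 2\ln(1+u)$ on $[0,1]$, which gives a slightly better constant that is then rounded up to the same $4$.
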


The main novelty of \Cref{lem:generic_bound} is that the right-hand side scales with $\lambda^{1/2}$, which might be chosen smaller than $1$.
This improvement is due to the fact that multitask posterior variances are smaller than $(N+b)/(N+bN) \le 1$.
The right-hand side also depends on the multitask information gain $\gamma_T^\text{mt}(b)$, which is nontrivial to compute or upper bound.
In the next proposition, we provide practical upper bounds of $\gamma_T^\text{mt}(b)$, in terms of the kernel parameter $b$ and the single-task information gain $\gamma_T^\text{st}$.
\smallskip

\begin{restatable}{prop}{propinfogain}\label{prop:bound_info_gain}
Let $\lambda \le 1$, $N \ge 2$, and $T_i \ge 1$ for all $i \in [N]$.
Then, for any $b \ge 0$, we have
\[
\gamma_T^\textnormal{mt}(b) \le N \gamma_T^\textnormal{st} + \frac{b}{2}\left(T - \frac{N}{4}\right) - \frac{T}{2}\ln(1+b) \hspace{2.5em}  \text{and} \hspace{2.5em} \gamma^\textnormal{mt}_T(b) \leq \gamma_T^\textnormal{st} + \frac{T}{\lambda b}\,.
\]
\end{restatable}

We can now combine \Cref{thm:multitask_conf_intervals}, \Cref{lem:generic_bound}, and \Cref{prop:bound_info_gain} to obtain our main result: a bound on the multitask regret of \texttt{MT-UCB} run with the confidence width $\beta_t^\text{new}$ from \Cref{thm:multitask_conf_intervals} and a specific $\lambda$.
\smallskip

\begin{restatable}{thm}{thmmain}\label{thm:main_thm_regrets}
Assume that $B \ge 1$, and that \textnormal{\texttt{MT-UCB}} is run with $\beta_t = \beta^\textnormal{new}_t$ from \Cref{thm:multitask_conf_intervals}, and $\lambda = (N+b)/(N+bN)$.
Let $b = N/\epsilon^2$ if $T \le N$, $b=1/\epsilon^2$ if $T \ge N$ and $\epsilon \le N^{-1/4}T^{-1/2}$, and $b=0$ otherwise.
Let $R^\textnormal{st}(T) = B\sqrt{T\gamma^\textnormal{st}_T} + \sqrt{T\gamma_T^\textnormal{st}\big(\gamma_T^\textnormal{st} + \ln(1/\delta)\big)}$ be the single task regret bound achieved by \textnormal{\texttt{IGP-UCB}} (up to constant factors).
Then, there exists a universal constant $C$ such that with probability $1-2\delta$ we have (up to $\log N$ factors)
\begin{align*}
\textstyle\Rmt(T) \le C \min\Big\{\sqrt{N}R^\textnormal{st}(T) ~\,,~\, & R^\textnormal{st}(T) + \epsilon B T^{3/2}\Big(\sqrt{\gamma^\textnormal{st}_T + \ln(1/\delta)} + \epsilon\sqrt{T}\Big)~\,,\\
&R^\textnormal{st}(T) + \epsilon BT\sqrt{N}\Big(\sqrt{\gamma^\textnormal{st}_T + \ln(1/\delta)} + \epsilon \sqrt{NT}\Big)\,\Big\}\,.
\end{align*}
\end{restatable}

The regret bound of \Cref{thm:main_thm_regrets} is the minimum between three bounds, obtained exploiting the three different regimes of the confidence width $\beta^\text{new}_t$ derived in \Cref{thm:multitask_conf_intervals} (see \Cref{fig:improved_confidence}).
The \textbf{first bound} is obtained using $\beta_t^\text{new} \le \beta_t^\text{small-$b$}$, and shows that our approach cannot be worse than independent learning.
Indeed, it can be checked that, when facing $N$ tasks, the regret of running $N$ independent instances of \texttt{IGP-UCB} can be bounded by $\sqrt{N}$ times the single-task regret bound of \texttt{IGP-UCB}, that we denoted by $R^\text{st}(T)$.
Note however that our analysis slightly differs, insofar as we leverage the multitask information gain, while the independent analysis uses Jensen's inequality to aggregate the individual bounds, see \Cref{app:online_learning} for details.
Note finally that we are able to recover this bound as $\beta_t^\text{small-$b$}$ is tight at $b=0$, unlike $\beta_t^\text{naive}$.
The \textbf{second bound} uses $\beta_t^\text{new} \le \beta_t^\text{large-$b$}$ and consists of two terms: the single task regret bound and an additional term that scales with the task deviation $\epsilon$.
When the latter is small, i.e., when tasks are similar, the dominant term is $R^\text{st}(T)$, as if only one task were solved.
The \textbf{third bound} is similar, but obtained using $\beta_t^\text{new} \le \beta_t^\text{naive}$ and is useful when $T \geq N$.
In contrast with the independent bound, which does not exploit the task structure, the last two bounds show that multitask learning is always beneficial when the horizon $T$ (and thus the additional $\epsilon$-related term) is small.
As expected, this is particularly true when the number of tasks $N$ is large: while the independent bound increases, the second bound \emph{does not depend on $N$}.
On the other hand, one can note that the condition on $\epsilon$ to improve over independent becomes more constraining as the horizon $T$ increases.
This suggests that the benefit of multitask may vanish with the number of available points per task, an observation which is well-known by practitioners.
As far as we know, this work is the first one to provide theoretical evidence of such phenomenon.

We conclude this section by comparing \Cref{thm:main_thm_regrets} to existing results.
As already mentioned in the above discussion, independent \texttt{IGP-UCB} is a particular case of \texttt{MT-UCB}, such that we cannot be worse than the independent approach.
We incidentally recover its regret bound as the first bound in the minimum of \Cref{thm:main_thm_regrets}.
Regarding \texttt{GoB.Lin}, since it is also a specific instance of \texttt{MT-UCB} (for $\beta_t = \beta_t^\text{naive}$ and $\lambda=1$), \Cref{lem:generic_bound} allows to recover its regret bound \cite[Theorem~1]{cesa2013gang}.
\smallskip

\begin{cor}[Regret of \texttt{GoB.Lin}~\cite{cesa2013gang}]
For any $b$, the multitask regret of \textnormal{\texttt{GoB.Lin}} using parameter $b$ satisfies with probability $1-\delta$
\begin{equation}\label{eq:bound_gob}
\Rmt(T) \le 4 \beta^\textnormal{naive}_T(b) \sqrt{T\gamma_T^\textnormal{mt}(b)} \le 6\left(B\sqrt{N(1+b\epsilon^2)} + \sqrt{\gamma^\textnormal{mt}_T(b) + \ln(1/\delta)}\right) \sqrt{T\gamma^\textnormal{mt}_T(b)\,}\,.
\end{equation}
\end{cor}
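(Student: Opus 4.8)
The plan is to derive this corollary as a direct instantiation of the generic regret bound of \Cref{lem:generic_bound}, followed by a short arithmetic simplification. Recall from \Cref{tab:equivalence} that \texttt{GoB.Lin} is exactly \texttt{MT-UCB} run with confidence width $\beta_t = \beta_t^\textnormal{naive}$ and regularization $\lambda = 1$. Thus the task reduces to (i) checking that the hypotheses of \Cref{lem:generic_bound} are met for this parameter choice, which yields the first inequality, and (ii) substituting the closed form of $\beta_T^\textnormal{naive}(b)$ at $\lambda=1$ and bounding the resulting constants, which yields the second.

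For step (i), I would first verify the regularization condition $\lambda \ge (N+b)/(N+bN)$. With $\lambda = 1$ this amounts to $N + bN \ge N + b$, i.e., $b(N-1) \ge 0$, which holds for every $N \ge 1$ and $b \ge 0$. Next, the validity-of-intervals hypothesis is precisely the event established earlier in \Cref{sec:mt_setting} via \citep[Theorem~3.11]{abbasi2012online}: with probability at least $1-\delta$, $\big|\mu_t(i,x\,|\,b) - \fmt(i,x)\big| \le \beta_t^\textnormal{naive}(b)\,\sigma_t(i,x\,|\,b)$ simultaneously for all $t$, $i$, and $x$, so that $\fmt(i,x) \in [\mu_t \pm \beta_t^\textnormal{naive}(b)\,\sigma_t]$ as required. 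Conditioning on this event and invoking \Cref{lem:generic_bound} with $\beta_T(b) = \beta_T^\textnormal{naive}(b)$ and $\lambda = 1$ gives $\Rmt(T) \le 4\beta_T^\textnormal{naive}(b)\sqrt{T\gamma_T^\textnormal{mt}(b)}$, the first claimed inequality.

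For step (ii), I would plug $\lambda = 1$ into the definition of the naive width, namely $\beta_T^\textnormal{naive}(b) = B\sqrt{N(1+b\epsilon^2)} + \sqrt{2\big(\gamma_T^\textnormal{mt}(b) + \ln(1/\delta)\big)}$, so that
\[
4\beta_T^\textnormal{naive}(b) = 4B\sqrt{N(1+b\epsilon^2)} + 4\sqrt{2}\,\sqrt{\gamma_T^\textnormal{mt}(b) + \ln(1/\delta)}\,.
\]
Comparing termwise against the target $6B\sqrt{N(1+b\epsilon^2)} + 6\sqrt{\gamma_T^\textnormal{mt}(b)+\ln(1/\delta)}$, the first term is immediate since $4 \le 6$, and the second follows from the numerical fact $4\sqrt{2} = \sqrt{32} < \sqrt{36} = 6$. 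Multiplying through by the common factor $\sqrt{T\gamma_T^\textnormal{mt}(b)}$ then yields the second inequality and completes the proof.

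There is essentially no hard step here: the corollary is a bookkeeping consequence of \Cref{lem:generic_bound} specialized to the \texttt{GoB.Lin} parameters. The only points demanding care are confirming that the $\lambda$-condition degenerates to $b(N-1)\ge 0$ at $\lambda=1$, and observing that the mild constant inflation from $4\sqrt{2}$ to $6$ is exactly what permits the clean bound displayed in \eqref{eq:bound_gob}; both are routine.
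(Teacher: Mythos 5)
Your proposal is correct and follows exactly the route the paper intends: the corollary is stated as an immediate instantiation of \Cref{lem:generic_bound} with $\beta_t = \beta_t^\textnormal{naive}$ and $\lambda = 1$ (cf.\ \Cref{tab:equivalence}), and your verification that the condition $\lambda \ge (N+b)/(N+bN)$ reduces to $b(N-1)\ge 0$ together with the constant bookkeeping $4\sqrt{2} < 6$ is precisely the omitted routine checking.
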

If tasks are similar, i.e., when $\epsilon \ll 1$, bound \eqref{eq:bound_gob} suggests to choose $b > 0$; this does not impact too much the first term, but makes $\gamma^\textnormal{mt}_T(b)$ smaller.
However, we recall that the above bound instantiated with $b=0$ does not recover the independent bound.
It is instead $\sqrt{N}$ bigger, since $\beta_t^\text{naive}$ is not tight at $b=0$.
Hence, the \texttt{Gob.Lin} analysis is not sufficient to show that multitask learning improves over independent learning.
Our refined analysis, which uses instead $\beta_t^\text{new}$, closes this gap.

\subsection{Adapting to unknown task similarity}\label{sec:ada_mtucb}

In this section, we consider the case where parameter $\epsilon$ (i.e., a bound on the task deviation from the average, see~\eqref{eq:def_eps}) is a-priori unknown. Despite this challenge, we show that the regret bound of Theorem~\ref{thm:main_thm_regrets} can be approximately attained using an adaptive procedure, \texttt{AdaMT-UCB} (Algorithm~\ref{alg:adamt-ucb}), relegated to \Cref{app:adamtucb} due to space limitations. The proposed approach is inspired by the model selection scheme of~\cite[Section~7]{Pacchiano2020RegretBB} with a few important modifications that we will outline at the end of this section. \texttt{AdaMT-UCB} considers a plausible set of parameters $\mathcal{E} = \{e_1, \ldots, e_{|\mathcal{E}|}\} \subset (0,2]$ and, for each $e \in \mathcal{E}$, initializes an instance of the \texttt{MT-UCB} algorithm with parameters set according to Theorem~\ref{thm:main_thm_regrets} assuming $\epsilon=e$.
We denote such an instance as $\texttt{MT-UCB}(e)$. Moreover, we use the notation $\text{ucb}_t^e$ to denote the upper confidence bounds constructed by $\texttt{MT-UCB}(e)$.
We assume the existence of some $e\in \mathcal{E}$ such that $e \geq \epsilon$, so that at least one of the learners is \emph{well-specified} (i.e., its confidence bounds contain $\fmt$ with high probability). Our goal is to incur a regret which grows as the regret of the learner with the smallest $e$ such that $e \geq \epsilon$, since the smaller the $e$ the smallest the regret bound (see~\Cref{thm:main_thm_regrets}), as long as $e$ is a valid upper bound for $\epsilon$. Let us identify  with $e^\star$ such learner.\looseness-1

At each round $t$, \texttt{AdaMT-UCB} uses learner $e_t = \min \mathcal{E}$, and play the action $x_t$ suggested by it, i.e., the maximizer of $\text{ucb}_t^{e_t}(i_t, \cdot)$. Then, all $\texttt{MT-UCB}(e)$ learners are updated based on the observed reward. In the meantime, a \emph{misspecification test}
is carried out to check whether learner $e_t$ is well-specified. It compares the cumulative reward and the believed regret of learner $e_t$, with a lower confidence estimate on such reward according to the other learners. 
If the test triggers, learner $e_t$ is misspecified with high probability and gets removed from $\mathcal{E}$.
A \emph{new epoch} starts with the new set $\mathcal{E}$.
Let $\overline{\Rmt_\star}(T)$ denote the regret bound (Theorem~\ref{thm:main_thm_regrets}) of learner $e^\star$ had it been chosen from round $0$.
We can state the following.
\looseness=-1
\smallskip

\begin{restatable}{thm}{thmadamtucb}\label{cor:adamt-ucb} Assume that there exists $e\in \mathcal{E}$ such that $e \geq \epsilon$, and let $M$ be the number of learners $e\in \mathcal{E}$ such that $e < \epsilon$ (i.e., the number of misspecified learners in $\mathcal{E}$). The regret of \textnormal{\texttt{AdaMT-UCB}} satisfies with high probability $\Rmt(T) = \mathcal{O}\big(\sqrt{M+1} \cdot \overline{\Rmt_\star}(T)\big).$
\end{restatable}
Clearly, the number $M$ of misspecified learners is not known in advance but is always less than $|\mathcal{E}|$.
Note that when $\epsilon=0$, we have $M=0$ and we recover the single task regret bound.
Moreover, given $\rho \le 1$, we show in~\Cref{app:adamtucb} that one can attain a multiplicative accuracy $\rho$ over $\epsilon$, assuming that $\epsilon \geq \epsilon_\text{min}>0$, through an exponential grid with $M$ being polylogarithmic in $1/\rho$ and $1/\epsilon_\text{min}$.

\textbf{Relation with the approach of~\citep{Pacchiano2020RegretBB}.} Compared to~\citep[Section~7]{Pacchiano2020RegretBB}---where the goal is to adapt to an unknown features' dimension---the set of learners considered in \texttt{AdaMT-UCB} share \emph{the same dimension $d$}. This allows us to exploit the following two novelties with respect to~\citep{{Pacchiano2020RegretBB}}: (1) \emph{all} learners are updated from the data gathered from learner $i_t$ (Line~6 in Algorithm~\ref{alg:adamt-ucb}), and (2) the lower confidence bounds $L^e$ in the misspecification test (Line~8) are all computed using action $x_t$ (i.e., the action recommended by learner $i_t$), as opposed to using the actions recommended by each learner $e$. Both these points are only applicable to our setting, leading to a simpler regret analysis.

\section{Multitask Active Learning}
\label{sec:active_learning}

The goal of the online learning setup of Section~\ref{sec:online} is to optimize the tasks sequentially revealed by nature. In some situations (e.g., in~\cite{lugosi2009} or the drug discovery problem considered in Section~\ref{sec:experiments}), however, we care about the performance of multiple tasks \emph{simultaneously}, to eventually learn the best strategy for each one of them. Moreover, we ought to do so with minimal interactions $T$, i.e., minimizing the queries of the function $\fmt$. We capture this by the following \emph{active learning} protocol. 

\textbf{Learning protocol and regret.}
At each round $t$, the learner: chooses a strategy $\{ x_t^i, i \in [N]\}$ \emph{for each task}, chooses \emph{which task} $i_t \in [N]$ to query, and observes the noisy feedback $y_t = \fmt(i_t, x_t) + \xi_t$. The learner's goal is to minimize the \emph{active learning} regret:
\begin{equation*}\label{eq:regret_active_learning}
    \Rmt_\textnormal{AL}(T) = \sum_{t=1}^T \frac{1}{N}\sum_{i=1}^N \max_{x\in \mathcal{X}} \fmt(i,x) - \sum_{t=1}^T \frac{1}{N} \sum_{i=1}^N \fmt(i,x_t^i)\,.
\end{equation*}
Compared to the online learning regret of ~\Cref{eq:regret_online_learning}, the learner's performance at each round is here measured by the average reward coming from \emph{each} task (as opposed to just the task presented by nature). Moreover, compared to online learning, the learner faces the additional challenge of choosing---at each round---from which task information should be gathered. Intuitively, more difficult (or informative) tasks should be queried more often to ensure $\Rmt_\textnormal{AL}(T)$ grows sublinearly. To the best of our knowledge, the above protocol and regret notion are novel in the multitask literature.

\begin{wrapfigure}{R}{0.48\textwidth}
\vspace{-2.em}
\begin{minipage}[H]{0.48\textwidth}
\begin{algorithm}[H]
\caption{\texttt{MT-AL}}
\begin{algorithmic}
\FOR{t=1,\ldots, T}{}
   \STATE $x_t^i  = \arg\max_{x \in \mathcal{X}} \text{ucb}_{t-1}(i,x), \, \forall i \in  [N] $
    \STATE $i_t  = \arg\max_{i \in [N]} \beta_{t-1}^i\sigma_{t-1}(i, x_t^i)  $
    \STATE Observe: $y_t = \fmt(i_t, x_t^{i_t}) + \xi_t$
 \STATE Update $\text{ucb}_t(\cdot,\cdot)$ and $\sigma_t(\cdot,\cdot)$ based on \\ observations.
\ENDFOR
\end{algorithmic}\label{alg:mt_al}
\end{algorithm}
\end{minipage}
\vspace{-1em}
\end{wrapfigure}

In Algorithm~\ref{alg:mt_al} we present \texttt{MT-AL}, an efficient strategy that ensures sublinear active learning regret. 
Like in \texttt{MT-UCB}, \texttt{MT-AL} constructs confidence intervals around $\fmt$ and, at each round, select strategy 
$x_t^i  = \arg\max_{x \in \mathcal{X}} \text{ucb}_{t-1}(i,x)$ for each task $i\in [N]$. When it comes to selecting which task to query, \texttt{MT-AL} selects $i_t \in \arg\max_{i \in [N]} \beta_{t-1}^i\sigma_{t-1}(i, x_t^i) $, i.e., the task for which the believed optimizer $x_t^i$ is subject to maximal uncertainty (we use generic task-dependent widths $\beta_t^i$ for completeness). This rule, also known as \emph{uncertainty sampling} in the literature~\cite{settles2009active}, intuitively makes sure the learner can control the regrets for the tasks not queried and leads to the following theorem.

\begin{restatable}{thm}{thmactivelearning}\label{thm:active_singlecluster}
Suppose that for all tasks $i$, point $x$, and time $t$, we have that $\fmt(i, x) \in [\,\mu_t(i,x) \pm \beta_t^i \cdot \sigma_t(i,x)\,]$. Then, 
the \textnormal{\texttt{MT-AL}} algorithm ensures the active learning regret is bounded by\looseness=-1 
    \begin{equation*}
        \Rmt_\textnormal{AL}\leq 2\sum_{t=1}^T \beta_t^{i_t} \sigma_t(i_t, x_t^{i_t})\,,
    \end{equation*}
    where $\{i_t\}$ is the sequence of queried tasks and $\{x_t^{i_t}\}$ the strategies selected for each of them.
\end{restatable}
The above bound only relies on \textnormal{\texttt{MT-AL}} utilizing valid intervals around $\fmt$ and thus applies more broadly than our agnostic MT regression, e.g., when such intervals are constructed using a known multitask kernel $k\big((i,x),(i',x')\big)$). However, \Cref{thm:active_singlecluster} shows the active learning regret heavily depends on the constructed intervals, similar to online learning. In \textnormal{\texttt{MT-AL}}, these are additionally utilized for deciding which task to query at each round. When specialized to our agnostic MT kernel and improved confidence, we obtain the following.\looseness=-1

\begin{restatable}{cor}{corolactivelearning}\label{cor:active_learning}
Let \textnormal{\texttt{MT-AL}} utilize the MT regression estimates of Eq.~\eqref{eq:kernel_regression_mean}-\eqref{eq:kernel_regression_var} with parameters set according to Theorem~\ref{thm:main_thm_regrets}. Moreover, let $\overline{\Rmt}(T)$ be the bound on the online learning regret obtained in Theorem~\ref{thm:main_thm_regrets}. Then, with high probability, we have $\Rmt_\textnormal{AL}(T) \leq \overline{\Rmt}(T)$.
\end{restatable}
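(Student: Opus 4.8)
The plan is to show that, once the parameters are fixed as in Theorem~\ref{thm:main_thm_regrets}, the active learning regret bound of Theorem~\ref{thm:active_singlecluster} collapses onto exactly the same quantity that produces the online learning bound $\overline{\Rmt}(T)$. I would start from the guarantee of Theorem~\ref{thm:active_singlecluster}, namely $\Rmt_\textnormal{AL}(T) \le 2\sum_{t=1}^T \beta_t^{i_t}\,\sigma_t(i_t, x_t^{i_t})$, which is valid as soon as \texttt{MT-AL} uses valid confidence intervals around $\fmt$; these are exactly the intervals of Theorem~\ref{thm:multitask_conf_intervals}, which hold with high probability for the chosen parameters. Since $B$ and $\epsilon$ are uniform across tasks, the per-task widths all coincide, $\beta_t^i = \beta_t^\textnormal{new}(b)$, and bounding each $\beta_t^\textnormal{new}(b)$ by $\beta_T^\textnormal{new}(b)$ gives $\Rmt_\textnormal{AL}(T) \le 2\,\beta_T^\textnormal{new}(b)\sum_{t=1}^T \sigma_t(i_t, x_t^{i_t})$.

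The key observation is that the summand $\sigma_t(i_t, x_t^{i_t})$ is the posterior standard deviation evaluated precisely at the point $(i_t, x_t^{i_t})$ that \texttt{MT-AL} queries and observes at round $t$. Hence the realized design of \texttt{MT-AL} is the sequence $\{(i_t, x_t^{i_t})\}_{t=1}^T$, and these are exactly the $T$ points entering the Gram matrix $K_T$ defining $\gamma_T^\textnormal{mt}(b)$. I would then apply Cauchy--Schwarz, $\sum_t \sigma_t(i_t,x_t^{i_t}) \le \sqrt{T\sum_t \sigma_t^2(i_t,x_t^{i_t})}$, followed by the same kernelized sum-of-variances (elliptical potential) estimate that drives the proof of Lemma~\ref{lem:generic_bound}, relating $\sum_t \sigma_t^2$ at the observed points to $\lambda\,\gamma_T^\textnormal{mt}(b)$. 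Because the choice $\lambda = (N+b)/(N+bN)$ meets the requirement $\lambda \ge (N+b)/(N+bN)$ of Lemma~\ref{lem:generic_bound}, this yields $\Rmt_\textnormal{AL}(T) \le 4\,\beta_T^\textnormal{new}(b)\sqrt{\lambda\,T\,\gamma_T^\textnormal{mt}(b)}$, which is \emph{identical} to the online learning bound delivered by Lemma~\ref{lem:generic_bound}.

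To conclude, I would substitute the same information gain controls from Proposition~\ref{prop:bound_info_gain} and the same case-based choices of $b$ (as a function of $T$, $N$, and $\epsilon$) that Theorem~\ref{thm:main_thm_regrets} uses to convert $4\,\beta_T^\textnormal{new}(b)\sqrt{\lambda T\gamma_T^\textnormal{mt}(b)}$ into the explicit three-way minimum $\overline{\Rmt}(T)$. Since every ingredient---the width $\beta_t^\textnormal{new}$, the regularizer $\lambda$, the choice of $b$, and the information gain bounds---is exactly the one used for the online bound, the right-hand side equals $\overline{\Rmt}(T)$, giving $\Rmt_\textnormal{AL}(T) \le \overline{\Rmt}(T)$ with the same probability. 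The only real obstacle is justifying that the elliptical-potential/information-gain step remains valid for the \emph{actively} chosen observation points, rather than the nature-revealed ones of the online setting; this holds because $\gamma_T^\textnormal{mt}(b)$ is defined on whatever $T$ points are observed, and \texttt{MT-AL} observes exactly one point $(i_t, x_t^{i_t})$ per round, so the telescoping determinant argument carries through unchanged regardless of the (adaptive) selection rule.
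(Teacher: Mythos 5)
Your proposal is correct and follows essentially the same route as the paper: invoke Theorem~\ref{thm:active_singlecluster} under the high-probability validity of the intervals, then observe that the bound $2\sum_t \beta_{t-1}\sigma_{t-1}(i_t,x_t^{i_t}) \le 4\beta_T^\textnormal{new}(b)\sqrt{\lambda T\gamma_T^\textnormal{mt}(b)} \le \overline{\Rmt}(T)$ from the proofs of Lemma~\ref{lem:generic_bound} and Theorem~\ref{thm:main_thm_regrets} holds for \emph{any} sequence of queried tasks, in particular the actively chosen one. You also correctly identify and resolve the one subtlety the paper leaves implicit, namely that the elliptical-potential step is agnostic to how the observation points were selected.
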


Thus, \texttt{MT-AL} ensures the active learning regret is always bounded by its online learning counterpart. Moreover, the same considerations as in~\Cref{thm:main_thm_regrets} apply also here, regarding the benefit of multitask learning over independent single-task regression for instance. 
\section{Experiments}
\label{sec:experiments}

The goal of our experiments is to evaluate the effectiveness of the studied MT regression, and in particular of the improved confidence intervals obtained in Section~\ref{sec:mt_setting}, both in online learning and active learning setups. We utilize the following synthetic and real-world data. 

\emph{Synthetic data:}  
We generate tasks of the form $f_i = (1-\delta)\cdot \bar{f} + \delta\cdot f_\text{dev}^i, i\in [N]$, where $\bar{f}, f_\text{dev}^i$ are random unit vectors representing a common model and individual deviations, respectively. Moreover, actions consist of $10^4$ vectors $x\in \mathbb{R}^d$ from the sphere of radius 10. Observation noise is unit normal.

\emph{Drug discovery MHC-I data~\cite{widmer2010inferring}:}  The goal is to discover the peptides with maximal binding affinity to each Major
Histocompatibility Complex class-I (MHC-I) allele. The dataset from~\cite{widmer2010inferring} contains the standardized binding affinities (IC$_{50}$ values) of different
peptide candidates to the MHC-I alleles (tasks). For each allele, the dataset contains $\sim1000$ peptides represented as $x\in \mathbb{R}^{45}$ feature vectors. For our experiments, we utilize the $5$ alleles A-$\{0201, 0202, 0203, 2301, 2402 \}$, since they were shown in~\cite{widmer2010inferring} to share binding similarity. Note that such a problem falls into our multitask active learning setup, since we would like to retrieve the best peptide for each allele minimizing the number of interactions (i.e., lab experiments). Nevertheless, we also consider its online learning analog where we care about finding the best peptides for each revealed allele.

\begin{figure}[t]
        \centering
        \hspace{-1em}
        \begin{subfigure}[b]{0.5\textwidth}
        \includegraphics[width=.49\textwidth]{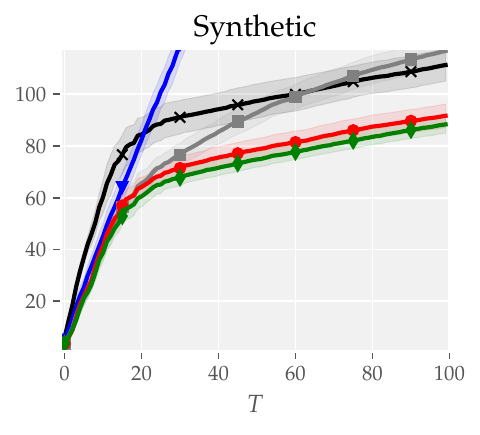}
\includegraphics[width=.49\textwidth]{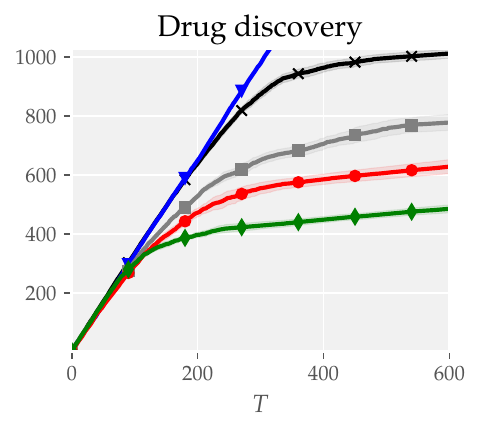}
\includegraphics[width=1\textwidth]{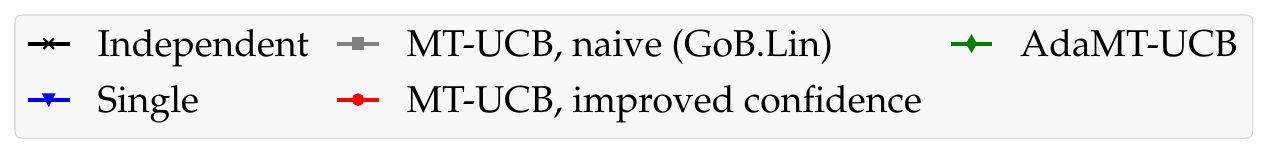}
        \caption{Online learning}
        \end{subfigure}
        \begin{subfigure}[b]{0.5\textwidth}
        \includegraphics[width=.49\textwidth]{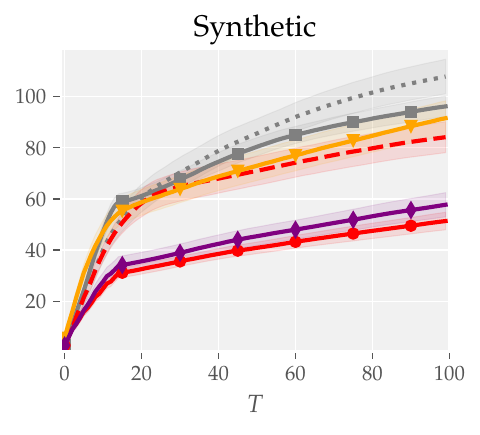}
\includegraphics[width=.49\textwidth]{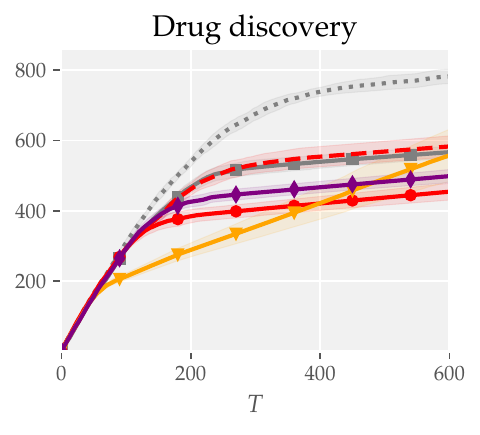}
\includegraphics[width=1\textwidth]{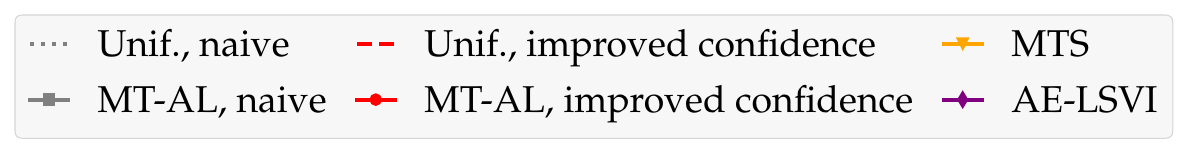}
        \caption{Active learning}
        \end{subfigure}
        \caption{Online and active learning regrets on synthetic and drug discovery MHC-I data, respectively. When utilizing the improved confidence intervals,
        \texttt{MT-UCB} and \texttt{MT-AL} outperform the other baselines.\looseness=-1}
          \vspace{-1.em}
\label{fig:fig_online_and_active}
    \end{figure}

\textbf{Online learning.} At each round $t$, a random task $i_t \in [N]$ is observed and point $x_t$ is selected according to the following baselines: (1) \emph{Independent}, which runs $N$ independent \texttt{IGP-UCB}~\cite{chowdhury2017} algorithms (corresponding to \texttt{MT-UCB} with $b=0$), (2) \emph{Single}, which treats all tasks to be the same and runs a unique single-task \texttt{IGP-UCB} (corresponding to \texttt{MT-UCB} with $b=+\infty$), (3) \texttt{MT-UCB} which utilizes an appropriate parameter $0<b<\infty$ as well as a bound on the tasks similarity $\epsilon$ (for synthetic data this can be exactly computed, while for MHC-I data we use $\epsilon=0.3$) and utilizes the \emph{naive} (i.e., \texttt{Gob.Lin}) or \emph{improved} confidence bounds, and (4) \texttt{AdaMT-UCB} which is run with the same $b$ but uses the set of plausible deviations $\mathcal{E} = \{.1, .2, \ldots, 1\}$ instead of knowing the true $\epsilon$. For choosing $b$, we sweep over possible values and select the best-performing one, keeping it fixed for all the baselines. 

\textbf{Active learning.} We follow the multitask active learning setup of Section~\ref{sec:active_learning}. All baselines utilize confidence intervals from the agnostic MT regression of Section~\ref{sec:mt_setting}, where $\epsilon$ and $b$ are chosen as for online learning. Moreover, they all utilize the improved confidence intervals, unless otherwise specified. We compare: (1) \emph{Unif.} which chooses the task $i_t$ to be queried uniformly at random (but still selects $x_t^i \in \arg\max_x \ucb_t^i(i, x)$) and employs the \emph{naive} or the \emph{improved} confidence intervals, the offline contextual Bayesian optimization baselines (2) \texttt{MTS}~\cite{char2019offline} and (3) \texttt{AE-LSVI}~\cite{li2022near}, and (4) \texttt{MT-AL} which utilizes the \emph{naive} or the \emph{improved} confidence intervals.  

We report the cumulative regret (online and active learning, respectively) of the considered baselines in Figure~\ref{fig:fig_online_and_active}, averaged over 5 runs. For the synthetic data, we report results for $d=4, N=5,\delta=0.4$, but provide a full set of experiments for different parameters in Appendix~\ref{app:additional_exps}. 
In Figure~\ref{fig:fig_online_and_active}~(a), both \texttt{MT-UCB} and \texttt{AdaMT-UCB} lead to superior performance compared to the \emph{Independent} and \emph{Single} baselines, demonstrating the benefits of MT regression. In addition, 
the improved confidence intervals significantly outperform the naive ones. Moreover, we observe \texttt{AdaMT-UCB} achieves comparable (sometimes even better, see Appendix~\ref{app:additional_exps}) performance to \texttt{MT-UCB}.
Indeed, instead of using a conservative choice of $\epsilon$, the misspecification test (Line~8 of Algorithm~\ref{alg:adamt-ucb}) of \texttt{AdaMT-UCB} allows to use a smaller $\epsilon$ and only increase it when there is evidence that the constructed intervals do not contain the true tasks. In active learning (Figure~\ref{fig:fig_online_and_active}~(b)), we observe \texttt{MT-AL} has a significant advantage over the uniform sampling baselines and \texttt{MTS}, while performing comparably to \texttt{AE-LSVI} (both methods are similar as discussed in~\Cref{app:compare_AELSVI}). Moreover, its regret is bounded by the online learning regret of \texttt{MT-UCB}, conforming with~\Cref{thm:active_singlecluster}. Importantly, the improved confidence intervals play a crucial role also here and enable a drastic performance improvement compared to the naive ones.\looseness=-1

\vspace{-.5em}
\section{Future Directions}
\vspace{-.5em}

We believe this paper opens up several future research directions. 
The derived confidence intervals, as well as our analysis of the multitask information gain, heavily exploit the structure of the task Gram matrix $K_\text{task}(b)$, see \Cref{eq:A_clique}. However, it remains unclear whether these can be extended to more general kernels. According to the graph perspective of~\cite{evgeniou2005learning}, $K_\text{task}(b)$ can be seen as $K_\text{task}(b) = I_N + L(b)$, where $L(b) \in \mathbb{R}^{N \times N}$ is the Laplacian matrix of a \emph{clique} graph with vertices $[N]$ and edge weight $b$. 
Hence, it would be interesting to extend our results to different graph structures.
Furthermore, we believe the proposed multitask confidence intervals hold potential for various related domains., e.g., to assess uncertainty in safety-critical systems~\cite{camilleri2022active}, or to balance exploration-exploitation in multitask reinforcement learning~\cite{vithayathil2020survey}. In such applications, the introduced notion of active learning regret can serve as measure for the overall sample-efficiency.

\section*{Acknowledgements}

This work was partially supported by ELSA (European Lighthouse on Secure and Safe AI) funded by the European Union under grant agreement No. 101070617, 
the ELISE (European Learning and Intelligent Systems Excellence) project EU Horizon 2020 ICT-48 research and innovation action under grant agreement No. 951847, and the FAIR (Future Artificial Intelligence Research) project, funded by the NextGenerationEU program within the PNRR-PE-AI scheme.

\bibliographystyle{apalike}
\bibliography{ref}

\appendix
\onecolumn

{\centering  \Large\textbf{Supplementary Material} \\
}
\vspace{1em}
We gather here the technical proofs and additional results complementing the main paper.

\section{Multitask Regression and Improved Confidence Intervals}\label{app:mt_regression}

\paragraph{Notation.}
In order to coincide with the notation of \cite{cesa2013gang}, we introduce the matrix $A(b) \in \mathbb{R}^{N \times N}$ such that $A^{-1}(b) \coloneqq K_\text{task}(b)$. For all $b \ge 0$ we have
\begin{alignat*}{2}
A(b) &= (1+b)I_N - b \frac{\mathbbm{1}_N\mathbbm{1}_N^\top}{N}\,,&  A(b)^{1/2} &= \sqrt{1+b}\,I_N + \big(1 - \sqrt{1+b}\big) \frac{\mathbbm{1}_N \mathbbm{1}_N^\top}{N}\,,\\
% %
A(b)^{-1} &= \frac{1}{1+b} I_N + \frac{b}{1+b} \frac{\mathbbm{1}_N\mathbbm{1}_N^\top}{N}\,,& \hspace{1cm} A(b)^{-1/2} &= \frac{1}{\sqrt{1+b}} I_N + \left(1 - \frac{1}{\sqrt{1+b}}\right)\frac{\mathbbm{1}_N \mathbbm{1}_N^\top}{N}\,.
\end{alignat*}
Note that in the following we often drop the dependence in $b$ and use only $A$, $A^{-1}$, $A^{1/2}$, $A^{-1/2}$.

%%%%%%%%%%%%%%%%%%%%%%%%%%%%%%%%%%%
%                                 %
%   PROOF EQUIVALENT REGRESSION   %
%                                 %
%%%%%%%%%%%%%%%%%%%%%%%%%%%%%%%%%%%

\subsection{Further intuitions about the MT regression of  Section \ref{sec:mt_regression}}\label{app:intuition_regression}

Given the history of measurements $\{(i_s, x_s), y_s\}_{s=1}^t$, the agnostic MT regression considered in \Cref{sec:mt_regression} estimates $\fmt$ (or equivalently the $f_i$) by standard kernel Ridge regression, i.e., by computing
\begin{equation}\label{eq:krr}
\mu_t = \argmin_{h \in \H_k} ~ \sum_{s=1}^t \big(h(i_s, x_s) - y_s\big)^2 + \lambda \|h\|_{\H_k}^2\,,
\end{equation}
where $\H_k$ is the RKHS associated to kernel $k$, and $\lambda > 0$ some regularization parameter.
In the next proposition, we provide some intuition about the role of parameter $b$ by exhibiting regression problems that are equivalent to problem \eqref{eq:krr} when $b=0$ and $b=+\infty$.
We relegate its proof to the end of this section.

\begin{restatable}{prop}{propextreme}\label{prop:extreme_cases}
Solving the multitask kernel Ridge regression problem \eqref{eq:krr} for $b=0$ is equivalent to solve $N$ independent kernel Ridge regressions, using each task's data separately, i.e., we have
\[
\forall\,i \in [N]\,, \qquad \mu_t(i, \cdot\,\vert\, b=0) = \argmin_{h \in \H_{k_\X}} ~ \sum_{s\colon i_s=i} \big(h(x_s) - y_s\big)^2 + \lambda \|h\|_{\H_{k_\X}}^2\,.
\]
On the other hand, solving \eqref{eq:krr} for $b=+\infty$ is equivalent to solve a unique kernel Ridge regression for all the tasks based on the entire dataset, i.e., we have
\[
\forall\,i \in [N]\,, \qquad \mu_t(i, \cdot\,\vert\, b=+\infty) = \argmin_{h \in \H_{k_\X}} ~ \sum_{s=1}^t \big(h(x_s) - y_s\big)^2 + \lambda N \|h\|_{\H_{k_\X}}^2\,.
\]
\end{restatable}
Hence, choosing some $b$ in $(0, +\infty)$ is choosing some tradeoff between these two extreme regimes.
In \Cref{subsec:mt_algo}, see \Cref{thm:main_thm_regrets}, we discuss how to set $b$ with respect to the tasks at hand.
\Cref{prop:extreme_cases} provides another intuition: to ensure constant regularization, parameter $\lambda$ should decrease with $b$.
This key observation was overlooked in \cite{cesa2013gang}, partly explaining our better guarantees, see \Cref{subsec:mt_algo}.
\smallskip

Another way to gain intuition about the setting is to compute the feature map induced by the multitask kernel $k$.
Let $\phi\colon \X \rightarrow \mathbb{R}^d$ be the canonical feature map associated to $k_\X$\footnote{For the sake of the exposition we consider finite dimensional feature maps. Our results naturally extend to infinite dimensional ones.}, and $\psi \colon \X \times  \uptoN \rightarrow \mathbb{R}^{Nd}$ such that $\psi(i,x) = (0, \ldots, \phi(x), \ldots, 0)$, with non-zero entry at block $i$.
Further define $\A = A \otimes I_d \in \mathbb{R}^{Nd \times Nd}$, where $\otimes$ denote the kronecker product.
It is immediate to check that
\begin{equation}\label{eq:psi_tilde}
\tilde{\psi}(i,x) = \A^{-1/2} \, \psi(i, x) = \Big(A^{-1/2}_{1i}\,\phi(x)\,, \ldots, A^{-1/2}_{ii}\,\phi(x)\,, \ldots, A^{-1/2}_{Ni}\,\phi(x)\Big) \in \mathbb{R}^{Nd}
\end{equation}
is a feature map associated to $k$.
Indeed, for any tasks $i, i' \in [N]$, and points $x, x' \in \X$, we have that $\big\langle \tilde{\psi}(i,x), \tilde{\psi}(i',x') \big\rangle = \sum_k A^{-1/2}_{ik} A^{-1/2}_{i'k} \langle \phi(x), \phi(x')\rangle = A^{-1}_{ii'}\,k_\X(x, x') = k\big((i, x), (i', x')\big)$.
Hence, $\tpsi(i, x)$ stores the feature map $\phi(x)$ in each block $j$, weighted by some coefficient $A^{-1/2}_{ji}(b)$, which quantifies how similar tasks $i$ and $j$ are assumed to be.
When $b=0$, only block $i$ receives $\phi(x)$.
When $b=+\infty$, each block receives $\phi(x)/N$.
Denoting abusively $f = (f_1, \ldots, f_N)$, note that

\begin{equation}\label{eq:identity}
\fmt(i, x) = \langle f_i, \phi(x)\rangle = \langle f, \psi(i, x)\rangle = \big\langle \A^{1/2}f, \A^{-1/2}\psi(i, x)\big\rangle = \big\langle \tf, \tpsi(i, x)\big\rangle\,,
\end{equation}

where $\tf = \big(\sum_j A^{1/2}_{1j}\,f_j\,,\, \ldots \,,\, \sum_jA^{1/2}_{Nj}\,f_j\big) \in \H_k$ is a transformed representation of $\fmt$ that will play a key role in the subsequent analysis.

\paragraph{Comparison to the model of~\citep{chowdhury2021no}.}
We note that an online multitask kernel regression setting has also been considered in \cite{chowdhury2021no}.
However, the setting of \cite{chowdhury2021no} considers \emph{multioutput} functions $f^\text{mo}\colon \X \rightarrow \mathbb{R}^{N}$ such that the $i^\text{th}$ output of $f_\text{mo}$ is given by $f_i$.
The authors learn $f^\text{mo}$ by leveraging matrix-valued kernels, a multiple output extension of scalar-valued kernel methods.
Although the functions modeled by kernel $k$ defined in \eqref{eq:mt_kernel} are isomorphic to that associated to the matrix-valued decomposable kernel given by $k_\mathrm{MV}(x, x') = k_\X(x, x')\,A^{-1}$, where we recall that $A^{-1}_{ii'} = k_\T(i, i')$, we highlight that both models are drastically different.
Indeed, while we only observe a single measurement at each time step $t$, namely a noisy version of $f_{i_t}(x_t)$, the model in \citep{chowdhury2021no} assumes that the learner can access $N$ different measurements, that of the $f_i(x_t)$ for all $i \in [N]$.
This key difference makes our model more flexible.
In particular, in~\citep{chowdhury2021no} all tasks must be observed the same number of times, and in addition at the same observation points.

%\propextreme*

%\begin{proof}
\subsubsection{Proof of Proposition \ref{prop:extreme_cases}}
Recall first that problem \eqref{eq:krr} writes
\[
\mu_t = \argmin_{h \in \H_k} ~ \sum_{s=1}^t \big(h(i_s, x_s) - y_s\big)^2 + \lambda \|h\|_{\H_k}^2\,.
\]
Using identity \eqref{eq:identity}, we obtain
\[
\mu_t = \argmin_{h \in \H_k} ~ \sum_{s=1}^t \Big( \big\langle h, \tpsi(i_s, x_s)\big\rangle - y_s\Big)^2 + \lambda \|h\|_{\H_k}^2\,,
\]
such that standard results for kernel Ridge regression gives that $\mu_t(x) = \big\langle w_t, \tpsi(x)\big\rangle$, with
\begin{equation}\label{eq:krr_solution}
w_t = \bigg(\sum_{s=1}^t \tpsi(i_s, x_s)\tpsi(i_s, x_s)^\top + \lambda I_{Nd}\bigg)^{-1}\bigg(\sum_{s=1}^t y_s\,\tpsi(i_s, x_s)\bigg)\,.
\end{equation}
Now, recall that $\tpsi(i,x) = \A^{-1/2} \, \psi(i, x) = \Big(A^{-1/2}_{1i}\,\phi(x)\,, \ldots, A^{-1/2}_{ii}\,\phi(x)\,, \ldots, A^{-1/2}_{Ni}\,\phi(x)\Big)$, such that
\begin{align}
\tpsi(i,x \,\vert\,b=0) &= (0, \ldots, 0, \underset{\text{block }i}{\phi(x)}, 0, \ldots, 0)\,,\label{eq:feature_map_0}\\
\tpsi(i,x \,\vert\,b=+\infty) &= (\phi(x)/N, \ldots, \phi(x)/N)\,.\label{eq:feature_map_inf}
\end{align}
Substituting \eqref{eq:feature_map_0} in \eqref{eq:krr_solution}, we obtain
\[
w_t = \begin{pmatrix}\displaystyle\sum_{\substack{s \le t\\\text{s.t. } i_s = 1}} \phi(x_s) \phi(x_s)^\top + \lambda I_d &&(0)\\&\ddots&\\(0)&&\displaystyle\sum_{\substack{s \le t\\\text{s.t. } i_s = N}} \phi(x_s) \phi(x_s)^\top + \lambda I_d\end{pmatrix}^{-1} \begin{pmatrix} \displaystyle\sum_{\substack{s \le t\\\text{s.t. } i_s = 1}} y_s \phi(x_s) \\ \vdots \\\displaystyle\sum_{\substack{s \le t\\\text{s.t. } i_s = N}} y_s \phi(x_s)\end{pmatrix}\,,
\]
or again,
\[
\forall\,i\in \uptoN, \qquad w^\mathrm{(i)}_t = \left(\sum_{s \le t \colon i_s=i} \phi(x_s)\phi(x_s)^\top + \lambda I_d\right)^{-1}\left(\sum_{s \le t \colon i_s=i} y_s \phi(x_s)\right)\,,
\]
where $w^{(i)} \in \mathbb{R}^d$ denotes the $i^\text{th}$ block of concatenated vector $w \in \mathbb{R}^{Nd}$.
We thus recover the solutions to the independent regressions stated in the first claim of \cref{prop:extreme_cases}.

Alternatively, substituting \eqref{eq:feature_map_inf} into \eqref{eq:krr_solution}, we obtain
\begin{align*}
w_t &= \begin{pmatrix}\displaystyle\frac{1}{N}\sum_{s=1}^t \phi(x_s) \phi(x_s)^\top + \lambda N\,I_d &&\displaystyle\frac{1}{N}\sum_{s=1}^t \phi(x_s) \phi(x_s)^\top\\&\ddots&\\\displaystyle\frac{1}{N}\sum_{s=1}^t \phi(x_s) \phi(x_s)^\top&&\displaystyle\frac{1}{N}\sum_{s=1}^t \phi(x_s) \phi(x_s)^\top + \lambda N\,I_d\end{pmatrix}^{-1} \begin{pmatrix} \displaystyle\sum_{s=1}^t y_s \phi(x_s) \\ \vdots \\\displaystyle\sum_{s=1}^t y_s \phi(x_s)\end{pmatrix}\\[0.4cm]
&= \begin{pmatrix}\displaystyle\sum_{s=1}^t \phi(x_s) \phi(x_s)^\top + \lambda N\,I_d && (0) \\ & \ddots & \\ (0) && \displaystyle \sum_{s=1}^t \phi(x_s) \phi(x_s)^\top + \lambda N\,I_d\end{pmatrix}^{-1} \begin{pmatrix} \displaystyle\sum_{s=1}^t y_s \phi(x_s) \\ \vdots \\\displaystyle\sum_{s=1}^t y_s \phi(x_s)\end{pmatrix}\,,
\end{align*}
such that for any $i \in \uptoN$ we have $w^\mathrm{(i)}_t = \left(\sum_{s =1}^t \phi(x_s)\phi(x_s)^\top + \lambda N\, I_d\right)^{-1}\left(\sum_{s =1}^t y_s \phi(x_s)\right)$, which are exactly the solutions to the regression problem on the full dataset stated in the second claim of \Cref{prop:extreme_cases}.
Note that the above equality can be easily checked by left multiplying both expressions by $\mathbbm{1}_N\mathbbm{1}_N^\top \otimes \frac{1}{N}\sum_{s=1}^t \phi(x_s) \phi(x_s)^\top + \lambda N\, I_{Nd}$. 
\hfill \qed
%
%\end{proof}

%%%%%%%%%%%%%%%%%%%%%%%%%%%%%%%
%                             %
%   PROOF IMPROVED INTERVAL   %
%                             %
%%%%%%%%%%%%%%%%%%%%%%%%%%%%%%%

\subsection{Proof of Theorem~\ref{thm:multitask_conf_intervals}}
\label{apx:interval}

\thminterval*

\begin{proof}
Recall that we are interested in obtaining high probability error bounds of the form
\[
\big|\mu_t(i, x) - \fmt(i, x) \big| \le \beta_t(b) \cdot \sigma_t(i, x \,|\, b)
\]
for a suitable choice of confidence width $\beta_t(b)$.

\subsubsection{A naive confidence width}
Using identity \eqref{eq:identity} and a direct application of \citep[Theorem~3.11 and Remark~3.13]{abbasi2012online}, together with the definition of $\gamma_t^\text{mt}(b)$, we can select
\[
\beta_t = \big\|\tilde{f}\big\| + \lambda^{-1/2} \sqrt{2\big(\gamma_t^\text{mt}(b) + \ln(1/\delta)\big)}\,.
\]
We now upper bound $\big\|\tilde{f}\big\|$ explicitly.
Recall that
\[
\tf = \left(\sum_j A^{1/2}_{1j}\,f_j\,,\, \ldots \,,\, \sum_jA^{1/2}_{Nj}\,f_j\right)\,,
\]
with $A = (1+b)I_N - (b/N) \mathbbm{1}\mathbbm{1}^\top$, so that we have
\begin{align}
\big\|\tf\big\|^2 &= \sum_{i=1}^N \Big\| \sum_{j=1}^N A^{1/2}_{ij}\,f_j \Big\|^2 = \sum_{i=1}^N \sum_{j=1}^N \sum_{k=1}^N A^{1/2}_{ij} A^{1/2}_{ik} \langle f_j, f_k\rangle\nonumber\\
&= \sum_{j, k = 1}^N A_{jk} \langle f_j, f_j \rangle = \sum_{i=1}^N \|f_i\|^2 + b\left(\sum_{i=1}^N \|f_i\|^2 - \frac{1}{N} \sum_{j,k=1}^N \langle f_j, f_k \rangle \right)\,.\label{eq:norm_1}
\end{align}

On the other side, it holds
\begin{align}
\sum_{i=1}^N \bigg\| f_i - \frac{1}{N}\sum_{j=1}^N f_j \bigg\|^2 &= \sum_{i=1}^N\left(\|f_i\|^2 + \frac{1}{N^2} \bigg\|\sum_{j=1}^N f_j\bigg\|^2 - \frac{2}{N} \sum_{j=1}^N \langle f_i, f_j\rangle\right)\nonumber\\
&= \sum_{i=1}^N\|f_i\|^2 + \frac{1}{N} \sum_{i,j=1}^N \langle f_i, f_j \rangle - \frac{2}{N} \sum_{i,j=1}^N \langle f_i, f_j\rangle\nonumber\\
&= \sum_{i=1}^N\|f_i\|^2 - \frac{1}{N} \sum_{i,j=1}^N \langle f_i, f_j \rangle\,.\label{eq:norm_2}
\end{align}

Substituting \eqref{eq:norm_2} into \eqref{eq:norm_1}, we obtain
\[
\big\|\tf\| = \sum_{i=1}^N \|f_i\|^2 + b \sum_{i=1}^N \|f_i - f_\text{avg}\|^2 \le NB^2(1+b\epsilon^2)\,.
\]
Overall, we can thus choose
\begin{equation}\label{eq:beta_naive}
\beta_t^\text{naive}(b)  = B\sqrt{N(1 + b\epsilon^2)} + \lambda^{-1/2}\sqrt{2\big(\gamma_t^\text{mt}(b) + \ln(1/\delta)\big)}\,.
\end{equation}
Note that this choice corresponds also to the confidence intervals utilized in the multitask regression setting of~\cite{cesa2013gang}.
However, we highlight that width \eqref{eq:beta_naive} is \emph{too conservative}.
Indeed, when setting $b = 0$ our model consists in solving single tasks independently, see \Cref{prop:extreme_cases}, such that the confidence width should be of the order $B + \mathcal{O}(\sqrt{\gamma_t^\text{st}})$.
Instead, we have
\[
\beta_t^\text{naive}(0) = \sqrt{N}B + \mathcal{O}\big(\sqrt{\gamma_t^\text{mt}(0)}\,\big) = \sqrt{N}\left(B + \mathcal{O}\big(\sqrt{\gamma_t^\text{st}}\,\big)\right)\,,
\]
which is $\sqrt{N}$ bigger.
%
% should mostly depend on the norm $\|f_i\|$ and not on the norms of the other tasks (for $b=0$ we know that multi-task regression coincides with single-task regression where $\beta_t =  \|f_i\| + \mathcal{O}(\sqrt{\gamma_{t_i}^\X})$ is the appropriate width).
%
% Similarly, when $b= \infty$ the multi-task regression degenerates into a single-task regression assuming all tasks are identical. If this assumption is indeed verified (i.e., $f_i = f_j, \forall, i,j$), we expect the aforementioned single-task width to be sufficient also in such a case. Between these two extrema, 
%
% Thus, we would like to obtain a confidence width $\beta_t$ which is less conservative than the one used by~\cite{cesa2013gang} while still providing high probability confidence intervals for the underlying tasks $f_1, \ldots f_N$, as a function of the kernel parameter $b$.
%
In the next subsection, we derive an improved confidence width which is tight at $b=0$.

\subsubsection{An improved confidence width}\label{app:improved_width}

As in~\citep[Proof of Theorem 2]{chowdhury2017,abbasi2011improved}, we start by bounding the prediction error as
\begin{align*}
\big|\fmt(i,x)- \mu_t(i,x)\big| &= \big|\fmt(i,x) - \bm{k}_t(i,x)^\top (K_t + \lambda I_t)^{-1}(\bar{\bm{y}}_{1:t} + \bm{\xi}_{1:t})\big| \\
& \leq \underbrace{\big|\fmt(i,x) - \bm{k}_t(i,x)^\top (K_t + \lambda I_t)^{-1}\bar{\bm{y}}_{1:t}\big|}_{\text{bias error}} +  \underbrace{\big|\bm{k}_t(i,x)^\top(K_t + \lambda I_t)^{-1} \bm{\xi}_{1:t}\big|}_{\text{variance error}}\hspace{0.03em},
\end{align*}
where $\bar{\bm{y}}_{1:t} \in \mathbb{R}^t$ is the vector of noise-free outputs such that $\bar{y}_s = \fmt(i_s, x_s) = f_{i_s}(x_s)$, and $\bm{\xi}_{1:t} \in \mathbb{R}^t$ is the vector of noises.
Below, we derive separate bounds for the bias and variance errors.
But before doing so, we depart from~\citep[Proof of Theorem 2]{chowdhury2017} and obtain an explicit lower bound for the predictive variance for task $i$ at point $x$.

\paragraph{\bf Lower bounding the predictive variance.}

First, we introduce some notation. Let $A_\otimes \in \mathbb{R}^{Nd \times Nd}$ such that
\[
A_\otimes \coloneqq A \otimes I_d  = (1+b)I_{Nd} - \frac{b}{N}\mathbbm{1}_N\mathbbm{1}_N^\top\otimes I_d\,,
\]

Moreover, for every $t \in \mathbb{N}$ and $i \in [N]$, let $t_i = \sum_{s \le t} \mathbbm{I}\{i_s = i\}$.
We denote by $\Phi_{t_i} \in \mathbb{R}^{t_i \times d}$ the matrix storing in rows the $\phi(x_s)$ for all $s$ such that $i_s = i$.
Similarly, let $\Psi_t, \tilde{\Psi}_t \in \mathbb{R}^{t \times Nd}$ storing in rows the $\psi(i_s, x_s)$ and $\tilde{\psi}(i_s, x_s)$ respectively, for all $s \le t$.
Recall that $\tpsi(i, x) = \A \psi(i, x)$, such that $\tilde{\Psi}_t = \Psi_t \A$.
Further, define
\begin{equation}\label{eq:def}
    M \coloneqq \Psi_t^\top \Psi_t + \lambda(1+b)I_{Nd} = \begin{bmatrix} M_1 &   &  0  \\ 
 & \ddots & \\ 
0  &    & M_N
\end{bmatrix}, \quad \text{with} \quad M_i = \Phi_{t_i}^\top \Phi_{t_i} + \lambda(1+b)I_d\,.
\end{equation}

By properties of the multitask kernel, see \eqref{eq:mt_kernel}, the predictive variance for task $i$ at decision point $x$ can be lower bounded as:
\begin{align}
    \sigma_t^2(i, x) & = k\big((i,x), (i,x)\big) - \bm{k}_t(i,x)^\top(K_t + \lambda I_t)^{-1}\bm{k}_t(i,x) \nonumber\\
    &= \tpsi(i, x)^\top\left(I_{Nd} - \tilde{\Psi}_t^\top\big(\tilde{\Psi}_t\tilde{\Psi}_t^\top + \lambda I_t\big)^{-1}\tilde{\Psi}_t\right) \tpsi(i, x)\nonumber\\
    &= \tpsi(i, x)^\top\left(I_{Nd} - \big(\tilde{\Psi}_t^\top\tilde{\Psi}_t + \lambda I_{Nd}\big)^{-1}\tilde{\Psi}_t^\top\tilde{\Psi}_t\right) \tpsi(i, x)\label{eq:push_through}\\
    &= \lambda \, \tpsi(i, x)^\top \big(\tilde{\Psi}_t^\top\tilde{\Psi}_t + \lambda I_{Nd}\big)^{-1} \tpsi(i, x)\nonumber\\
    & = \lambda\,\psi(i, x)^\top A_\otimes^{-1/2}\left(A_\otimes^{-1/2} \Psi_t^\top \Psi_t A_\otimes^{-1/2} + \lambda A_\otimes^{-1/2} A_\otimes A_\otimes^{-1/2}\right)^{-1} A_\otimes^{-1/2}\psi(i, x)  \nonumber\\
    & = \lambda\,\psi(i, x)^\top \left(\Psi_t^\top \Psi_t  + \lambda A_\otimes \right)^{-1} \psi(i, x) \nonumber \\ 
    & = \lambda\,\psi(i, x)^\top \left(M  - \frac{\lambda b}{N} \mathbbm{1}_N\mathbbm{1}_N^\top \otimes I_d \right)^{-1} \psi(i, x)\nonumber\\
    & = \lambda\,\psi(i, x)^\top M^{-1} \psi(i, x)\nonumber\\
    &\quad + \lambda\,\psi(i, x)^\top M^{-1} \left(\mathbbm{1}_N\mathbbm{1}_N^\top \otimes \frac{\lambda b}{N}\left(I_{Nd} - \frac{\lambda b}{N} \sum_{i=1}^N M_i^{-1}\right)^{-1}\right)M^{-1}\,\psi(i, x)\label{eq:lemma_sherman}\\
    & = \lambda\,\phi(x)^\top M_i^{-1} \phi(x) + \lambda\, \phi(x)^\top M_i^{-1} \frac{\lambda b}{N}\,\underbrace{\left(I_d - \frac{\lambda b}{N} \sum_{i=1}^N M_i^{-1}\right)^{-1}}_{\coloneqq X^{-1}} M_i^{-1}\phi(x)\nonumber\\
    & = \lambda \left(\big\|M_i^{-1/2} \phi(x)\big\|^2 + \frac{\lambda b}{N}\big\|X^{-1/2} M_i^{-1}\phi(x)\big\|^2\right)\nonumber\\
    & \ge \lambda \big\|M_i^{-1/2} \phi(x)\big\|^2 + \frac{b}{(1+b)N}\big\|\lambda X^{-1} M_i^{-1}\phi(x)\big\|^2\label{eq:var_lb_1}\\
    & \ge (1+b)\big\|\lambda M_i^{-1} \phi(x)\big\|^2 + \frac{b}{(1+b)N}\big\|\lambda X^{-1} M_i^{-1}\phi(x)\big\|^2\,,\label{eq:var_lb_2}
\end{align}
where \eqref{eq:push_through} comes from the \emph{push-through} equality, \eqref{eq:lemma_sherman} from \Cref{lem:SM_kronecker} applied to $D = M$ and $P = - \frac{\lambda b}{N} I_d$, \eqref{eq:var_lb_1} from the fact that $X \succeq 1/(1+b)\,I_d$,  and \eqref{eq:var_lb_2} from $M_i \succeq \lambda(1+b)I_d$.
Indeed, the later can easily be checked from \eqref{eq:def}, which also implies that $(\lambda b)/N\,\sum_i M_i^{-1} \preceq b/(1+b)\,I_d$, such that $X \succeq 1/(1+b)\,I_d$.
We now upper bound the bias error in terms of $\sigma_t^2(i, x)$.

\paragraph{\bf Bounding the bias error.}
Using similar steps as before, we have
\begin{align}
&\big|\fmt(i, x) - \bm{k}_t(i,x)^\top (K_t + \lambda I_t)^{-1}\bar{\bm{y}}_{1:t}\big|\nonumber\\
&= \big|\tpsi(i, x)^\top \tf  - \tpsi(i, x)^\top \tilde{\Psi}_t^\top(\tilde{\Psi}_t\tilde{\Psi}_t^\top + \lambda I_t)^{-1} \tilde{\Psi}_t \tf \big|\nonumber\\
&= \left|\tpsi(i, x)^\top \tf - \tpsi(i, x)^\top (\tilde{\Psi}_t^\top\tilde{\Psi}_t + \lambda I_{Nd})^{-1} \tilde{\Psi}_t^\top \tilde{\Psi}_t \tf\right|\nonumber\\
&= \big|\lambda\,\tpsi(i, x)^\top (\Psi_t^\top\Psi_t + \lambda I_{Nd})^{-1} \tf\big|\nonumber\\
&= \big|\lambda \, 
\psi(i, x)^\top (\Psi_t^\top \Psi_t + \lambda A_\otimes)^{-1} A_\otimes^{1/2} \tf \big|\nonumber\\
&= \left| \lambda\,\psi(i, x)^\top \left[M^{-1} + M^{-1} \left(\mathbbm{1}\mathbbm{1}^\top \otimes \frac{\lambda b}{N} \left(I_d - \sum_{i=1}^N M_i^{-1}\right)^{-1}\right)M^{-1}\right]A_\otimes^{1/2} \tf \,\right|\nonumber\\
&= \left| \big(\lambda M_i^{-1} \phi(x)\big)^\top \left[A_\otimes^{1/2} \tf\right]_{[i]} + \frac{\lambda b}{N} \left(\lambda X^{-1} M_i^{-1} \phi(x)\right)^\top \sum_{l=1}^N \left[M^{-1}A_\otimes^{1/2} \tf \right]_{[l]}\,\right|\nonumber\\
&\le \big\|\lambda M_i^{-1} \phi(x)\big\| \cdot \big\| f_i + b (f_i - f_\text{avg})\big\| + \frac{\lambda b}{N} \left\|\lambda X^{-1} M_i^{-1} \phi(x)\right\| \cdot \left\|\sum_{l=1}^N M^{-1}_l \left[A_\otimes^{1/2} \tf\right]_{[l]} \right\|\nonumber\\
&\le \sqrt{1+b}\,\left\|\lambda M_i^{-1} \phi(x)\right\| \, \frac{B(1 + b\epsilon)}{\sqrt{1+b}}\nonumber\\
&\quad+ \sqrt{\frac{b}{(1+b)N}} \left\|\lambda X^{-1} M_i^{-1} \phi(x)\right\| \cdot \lambda \sqrt{\frac{b(1+b)}{N}} \, \underbrace{\left\| \sum_{l=1}^N M_l^{-1}\left[A_\otimes^{1/2} \tf\right]_{[l]} \right\|}_{:= \|D\|}\nonumber\\
&\le \sqrt{\left((1+b)\left\|\lambda M_i^{-1} \phi(x)\right\|^2 + \frac{b\left\|\lambda X^{-1} M_i^{-1} \phi(x)\right\|^2}{(1+b)N}\right)\left( \frac{B^2(1 + b\epsilon)^2}{1+b} + \frac{\lambda^2 b(1+b)}{N} \left\| D \right\|^2\right)}\label{eq:cs_1}\\
&\le \sigma_t(i, x) \cdot \sqrt{\frac{B^2(1 + b\epsilon)^2}{1+b} + \frac{\lambda^2 b(1+b)}{N} \left\| D \right\|^2}\,,\label{eq:bound_with_D}
\end{align}
where we have used $v_{[i]}$ to denote the block $i$ of a concatenated vector in $\mathbb{R}^{Nd}$, Cauchy-Schwarz inequality to derive \eqref{eq:cs_1}, and lower bound \eqref{eq:var_lb_2} to obtain \eqref{eq:bound_with_D}.
Note that $D$ depends on the products between data matrices $M_l$ and task vectors $f_i, \ldots, f_N$, such that it is unknown in general.
Below, we provide two upper bounds for $\|D\|$.

\paragraph{\it Bound 1 (small $b$ range).}
We can bound $\|D\|$ using Cauchy-Schwarz. We have
\begin{equation*}
    \left\| D \right\| \leq \sum_{l=1}^N \left\|M_l^{-1}\right\|_* \left\|\left[A_\otimes^{1/2} \tf\right]_{[l]} \right\| \leq \frac{1}{\lambda (1+b)} \sum_{l=1}^N \big\| f_l + b(f_l - f_\text{avg})\big\| \leq \frac{NB(1+ b\epsilon)}{\lambda (1+b)}\,,
\end{equation*}
which yields
\begin{equation}\label{eq:bias_small_b}
\big|\fmt(i, x) - k_t(i,x)^\top (K_t + \lambda I_t)^{-1}\bar{y}_{1:t}\big| \le \sigma_t(i, x) \cdot B(1 + b \epsilon) \sqrt{\frac{1 + bN}{1+b}} \,.    
\end{equation}
The above bound is useful when $b$ is small.
Indeed, when $b=0$, according to the above we recover the single-task confidence width $B\cdot\sigma_t(i, x)$.
However, as $b$ goes to $+\infty$, the bound grows as $\mathcal{O}\big(\sqrt{N}b\epsilon\big)$, which is an order of $\sqrt{b}$ faster than the naive one, see \eqref{eq:beta_naive}.
We thus provide another upper bound on $\|D\|$, which is tighter for large values of $b$.

\paragraph{\it Bound 2 (large $b$ range).}
Alternatively, we can bound $\|D\|$ leveraging the SVD of the data matrices used to build $M_l$ (recall that $M_l = \lambda (1+b)I_d + \Phi_{t_l}^\top \Phi_{t_l}$).
For $l \le N$, let $\Phi_{t_l}^\top \Phi_{t_l} = \sum_k \sigma_k^{(l)} u_k^{(l)}{u_k^{(l)}}^\top$ be the SVD of the data matrix $\Phi_{t_l}^\top \Phi_{t_l}$.
We have
\[
M_l^{-1} = \sum_k \frac{1}{\sigma_k^{(l)} + \lambda(1+b)} u_k^{(l)}{u_k^{(l)}}^\top = \frac{1}{\lambda(1+b)} I_d - \sum_k \frac{\sigma_k^{(l)}}{\lambda(1+b)\big(\sigma_k^{(l)} + \lambda(1+b)\big)} u_k^{(l)}{u_k^{(l)}}^\top\,,
\]
so that
\begin{align}
\|D\| & = \bigg\|\sum_{l=1}^N M_l^{-1} \big(f_l + b(f_l - f_\text{avg})\big) \bigg\| \nonumber\\
&\le \frac{1}{\lambda(1+b)} \left\| \sum_{l=1}^N f_l \right\| +  \frac{1}{\lambda(1+b)} \left\| \sum_{k, l} \frac{\sigma_k^{(l)}}{\sigma_k^{(l)} + \lambda(1+b)} u_k^{(l)}{u_k^{(l)}}^\top \big(f_l +  b(f_l - f_\text{avg})\big) \right\|\nonumber\\
&\le \frac{NB}{\lambda(1+b)} + \frac{1}{\lambda(1+b)} \sum_{k, l} \frac{\sigma_k^{(l)}}{\sigma_k^{(l)} + \lambda(1+b)} \big\| f_l +  b(f_l - f_\text{avg})\big\|\nonumber\\
&\le \frac{NB}{\lambda(1+b)} + \frac{B(1+b\epsilon)}{\lambda(1+b)} \sum_{l=1}^N \Tr\left(K_{t_l} \big(K_{t_l} + \lambda(1+b) I_{t_l}\big)^{-1}\right)\label{eq:data_dep}\\
&\le \frac{NB}{\lambda(1+b)} + \frac{B(1+b\epsilon)}{\lambda(1+b)} \sum_{l=1}^N \Tr(K_{t_l}) \cdot \lambda_\text{max}\left(\big(K_{t_l} + \lambda(1+b) I_{t_l}\big)^{-1}\right)\nonumber\\
&\le  \frac{NB}{\lambda(1+b)} +  \frac{B (1 + b\epsilon)}{\lambda^2(1+b)^2} \sum_{l=1}^N t_l\nonumber\\
&\le  \frac{NB}{\lambda(1+b)} +  \frac{B (1 + b\epsilon)}{\lambda^2(1+b)^2}~t\nonumber\,.
\end{align}

We note that in practice the data-dependent bound~\eqref{eq:data_dep} might be tighter than the latter one when $\lambda$ is small. However, for simplicity of the exposition we focus on the latter data-independent bound.
Substituting it into \eqref{eq:bound_with_D}, we obtain
\begin{align}
\big|\fmt(i,x) - \bm{k}_t(i, x)^\top (K_t + &\lambda I_t)^{-1}\bar{\bm{y}}_{1:t}|\nonumber\\
&\le \sigma_t(i, x) \cdot \sqrt{\frac{B^2(1+b\epsilon)^2}{1+b} + 2B^2 \frac{bN}{1+b} + 2t^2 \frac{B^2(1+b\epsilon)^2b}{\lambda^2(1+b)^3N}}\nonumber\\
&= \sigma_t(i, x) \cdot B \sqrt{\frac{(1+b\epsilon)^2}{1+b} + \frac{2bN}{1+b} + \frac{2b(1+b\epsilon)^2}{N\lambda^2(1+b)^3}\,t^2}\,.\label{eq:bias_big_b}
\end{align}
When $b$ goes to $0$, we recover $B \cdot\sigma_t(i, x)$, as for Bound 1.
However, the above bound is more useful when $b$ is large.
Indeed, when $b$ goes to $+\infty$, we obtain $B \sqrt{b \epsilon^2 + 2N + 2\epsilon^2 t^2/N\lambda^2\,}\,\sigma_t(i, x) = \mathcal{O}\big(B \sqrt{b}\epsilon\big)\,\sigma_t(i, x)$, which improves by a factor $\sqrt{N}$ over the $\mathcal{O}\big(B \sqrt{bN}\epsilon\big)\,\sigma_t(i, x)$ term obtained with the naive bound.
Recall that when $b$ goes to $+\infty$, MT regression is equivalent to solve a single averaged task based on the whole dataset, see \Cref{prop:extreme_cases}, such that obtaining a confidence width independent from $N$ is expected.
Overall, combining \eqref{eq:beta_naive}, \eqref{eq:bias_small_b} and \eqref{eq:bias_big_b}, we obtain that
\begin{equation}\label{eq:bias_final}
 \big|\fmt(i, x) - \bm{k}_t(i, x)^\top (K_t + \lambda I_t)^{-1}\bar{\bm{y}}_{1:t}\big|  \leq \beta_t^\textnormal{bias}(b) \cdot \sigma_t(i, x)\,,
\end{equation}
where
$\beta_t^\textnormal{bias}(b) = B\,\min\left\{\sqrt{N(1 + b\epsilon^2)},\, (1+b\epsilon)\sqrt{\frac{1+bN}{1+b}},\, \sqrt{\frac{(1+b\epsilon)^2}{1+b} + \frac{2bN}{1+b} + \frac{2b(1+b\epsilon)^2}{N\lambda^2(1+b)^3}\,t^2} \,\right\}$.
We now turn to the variance error.

\paragraph{\bf Bounding the variance error.}
Using \eqref{eq:beta_naive}, the variance error can be naively bounded by $\beta_t^\text{naive, var}(b) \cdot \sigma_t(i, x)$, where $\beta_t^\text{naive, var}(b) = \lambda^{-1/2} \sqrt{2\big(\gamma^\text{mt}_t(b) + \ln(1/\delta)\big)}$.
However, note that the above bound is conservative when parameter $b$ is small.
Indeed, in the limit of $b=0$ (tasks are treated independently), we know that such error should only depend on the information gain of task $i$, i.e., $\gamma^\text{st}_{t_i}$.
Instead, $\gamma_t^\text{mt}(0) = \mathcal{O}\big(N \, \gamma^\text{st}_{t_i}\big)$, which is $N$ times bigger.
Let $P_i := \Phi_{t_i}^\top \Phi_{t_i} + I_d$, and note that $M_i^{-1} \preceq P_i^{-1}$, since $\lambda \ge 1/(1+b)$.
Looking at a single task $i$, with probability $1-\delta$ we have
\begin{align}
\big\|M_i^{-1/2} \Phi_{t_i}^\top \bm{\xi}_{[t_i]} \big\|^2 &\le \big\|P_i^{-1/2} \Phi_{t_i}^\top \bm{\xi}_{[t_i]} \big\|^2\nonumber\\
&= \bm{\xi}_{[t_i]}^\top \Phi_{t_i} P_i^{-1} \Phi_{t_i}^\top \bm{\xi}_{[t_i]}\nonumber\\
&= \bm{\xi}_{[t_i]}^\top \Phi_{t_i} \big(\Phi_{t_i}^\top \Phi_{t_i} + I_d\big)^{-1} \Phi_{t_i}^\top \bm{\xi}_{[t_i]}\nonumber\\
&= \bm{\xi}_{[t_i]}^\top \big(K_{t_i} + I_{t_i}\big)^{-1} K_{t_i}\,\bm{\xi}_{[t_i]}\nonumber\\
&= \bm{\xi}_{[t_i]}^\top \big(I_{t_i} + K^{-1}_{t_i}\big)^{-1} \bm{\xi}_{[t_i]}\nonumber\\
&\le 2\left(\frac{1}{2} \ln \big|I_{t_i} + K_{t_i}\big| + \ln(1/\delta)\right)\label{eq:norm_residuals}\\
&\le 2\left(\frac{1}{2} \ln \big|I_{t_i} + \lambda^{-1}K_{t_i}\big| + \ln(1/\delta)\right)\label{eq:lambda_1}\\
&\leq 2\big(\gamma^\text{st}_{t_i} + \ln(1/\delta)\big)\,,\nonumber
\end{align}
where $\bm{\xi}_{[t_i]} \in \mathbb{R}^{t_i}$ contains the observation noises related to the time steps when task $i$ was active, and $K_{t_i} \in \mathbb{R}^{t_i \times t_i}$ is the individual Gram matrix based on such observations.
Equation \eqref{eq:norm_residuals} is obtained by applying \cite[Theorem~1]{chowdhury2017} with $\eta=0$, while \eqref{eq:lambda_1} derives from $\lambda \le 1$.
By the union bound, we get that with probability at least $1-\delta$, we have
\[
\sup_{i \le N} \big\|P_i^{-1/2} \Phi_{t_i}^\top \bm{\xi}_{[t_i]} \big\|^2 \le 2\big(\gamma^\text{st}_t + \ln(N/\delta)\big)\,.
\]
Then, we obtain
\begin{align}
\big|k_t(i,x)^\top&(K_t + \lambda I_t)^{-1} \bm{\xi}_{1:t}\big|\nonumber\\
&= \Big|\tilde{\psi}(i, x)^\top \big(\tilde{\Psi}_t^\top \tilde{\Psi}_t + \lambda I_{Nd}\big)^{-1} \tilde{\Psi}_t^\top \bm{\xi}_{1:t}\Big|\nonumber\\
&= \Big|\psi(i, x)^\top \big(\Psi_t^\top\Psi_t + \lambda A_\otimes\big)^{-1}  \Psi_t^\top \bm{\xi}_{1:t}\Big|\nonumber\\ 
&= \left|\phi(x)^\top M_i^{-1} \Phi_{t_i}^\top \bm{\xi}_{[t_i]} + \frac{\lambda b}{N} \big(X^{-1}M_i^{-1}\phi(x)\big)^\top \sum_{l=1}^N \big[M^{-1} \Psi_t^\top \bm{\xi}_{1:t}\big]_{[l]} \right|\nonumber\\
&\leq \sqrt{\lambda}\,\Big\|M_i^{-1/2}\phi(x)\Big\| \cdot \frac{1}{\sqrt{\lambda}}\Big\|M_i^{-1/2} \Phi_{t_i}^\top \bm{\xi}_{[t_i]}\Big\|\nonumber\\
&\quad+ \sqrt{\frac{b}{(1+b)N}}\,\Big\|\lambda X^{-1}M_i^{-1}\phi(x)\Big\| \cdot \sqrt{\frac{b(1+b)}{N}} \left\| \sum_{l=1}^N M_l^{-1} \Phi_{t_l}^\top \bm{\xi}_{[t_l]} \right\|\nonumber\\ 
&\leq \sqrt{\lambda}\,\Big\|M_i^{-1/2}\phi(x)\Big\| \cdot \frac{1}{\sqrt{\lambda}}\Big\|M_i^{-1/2} \Phi_{t_i}^\top \bm{\xi}_{[t_i]}\Big\|\nonumber\\
&\quad+ \sqrt{\frac{b}{(1+b)N}}\,\Big\|\lambda X^{-1}M_i^{-1}\phi(x)\Big\| \cdot \sqrt{b(1+b)N} ~ \sup_{l \le N} \big\|M_l^{-1/2}\big\|_* \cdot \big\|M_l^{-1/2} \Phi_{t_l}^\top \bm{\xi}_{[t_l]} \big\|\nonumber\\
&\leq \sqrt{\lambda}\,\Big\|M_i^{-1/2}\phi(x)\Big\| \cdot \frac{1}{\sqrt{\lambda}}\Big\|M_i^{-1/2} \Phi_{t_i}^\top \bm{\xi}_{[t_i]}\Big\|\nonumber\\
&\quad+ \sqrt{\frac{b}{(1+b)N}}\,\Big\|\lambda X^{-1}M_i^{-1}\phi(x)\Big\| \cdot \sqrt{\frac{bN}{\lambda}}\,\sup_{l \le N} \big\|M_l^{-1/2} \Phi_{t_l}^\top \bm{\xi}_{[t_l]} \big\|\nonumber\\
&\leq \left(\lambda \Big\|M_i^{-1/2}\phi(x) \Big\|^2  + \frac{b}{(1+b)N}\Big\|\lambda X^{-1}M_i^{-1}\phi(x)\Big\|^2 \right)^{1/2}\nonumber\\
&\qquad\cdot\left(\frac{1}{\lambda} \Big\|P_i^{-1/2} \Phi_{t_i}^\top \bm{\xi}_{[t_i]}\Big\|^2 + \frac{bN}{\lambda} \, \sup_{l \le N} \big\|P_l^{-1/2} \Phi_{t_l}^\top \bm{\xi}_{[t_l]} \big\|^2\right)^{1/2}\nonumber\\
& \leq \lambda^{-1/2}\,\sigma_t(i,x)~\sup_{l \le N} \big\|P_l^{-1/2} \Phi_{t_l}^\top \bm{\xi}_{[t_l]} \big\|\, \sqrt{1 + bN}\label{eq:using}\\
& \leq \lambda^{-1/2}\sqrt{2(1 + bN)\big(\gamma^\text{st}_t + \ln(N/\delta)\big)}\,\sigma_t(i,x)\,,\nonumber
\end{align}
where \eqref{eq:using} comes from lower bound \eqref{eq:var_lb_1}.
Finally, we can take the minimum of this bound and the naive one.
Using the union bound again, with probability at least $1-2\delta$, we have
\begin{equation}\label{eq:var_final}
\big|\bm{k}_t(i,x)^\top(K_t + \lambda I_t)^{-1} \bm{\xi}_{1:t}\big|  \leq \beta_t^\text{var}(b) \cdot \sigma_t(i,x),
\end{equation}
where $\beta_t^\text{var}(b) = \lambda^{-1/2} \min \left\{\sqrt{2\big(\gamma^\text{mt}_t(b) + \ln(1/\delta)\big)}, \sqrt{2(1+bN)\big(\gamma^\text{st}_t + \ln(N/\delta)\big)}\,\right\}.$

\paragraph{Overall error bound.}
We can obtain the overall prediction error bound by combining bounds \eqref{eq:bias_final} and \eqref{eq:var_final} for the bias and variance errors respectively.
Hence, with probability $1-2\delta$, we have
\begin{align*}
\big|\fmt(i,x)- \mu_t(i,x)\big| & = \big|\fmt(i,x) - \bm{k}_t(i,x)^\top (K_t + \lambda I_t)^{-1}(\bar{\bm{y}}_{1:t} + \bm{\xi}_{1:t})\big| \\
&\leq \underbrace{\big|\fmt(i,x) - \bm{k}_t(i,x)^\top (K_t + \lambda I_t)^{-1}\bar{\bm{y}}_{1:t}\big|}_{\text{bias error}} +  \underbrace{\big|\bm{k}_t(i,x)^\top(K_t + \lambda I_t)^{-1} \bm{\xi}_{1:t}\big|}_{\text{variance error}}\\ 
&\leq \big(\beta_t^\textnormal{bias}(b) + \beta_t^\text{var}(b)\big) \cdot \sigma_t(i,x)\,,
\end{align*}
with $\beta_t^\textnormal{bias}(b) = B \min\left\{\sqrt{N(1 + b\epsilon^2)}, (1+b\epsilon)\sqrt{\frac{1+bN}{1+b}}, \sqrt{\frac{(1+b\epsilon)^2}{1+b} +  \frac{2bN}{1+b} + \frac{2b(1+b\epsilon)^2}{N\lambda^2(1+b)^3}\,t^2}\,\right\}$,
and
$\beta_t^\text{var}(b) = \lambda^{-1/2} \min \left\{\sqrt{2\big(\gamma^\text{mt}_t(b) + \ln(1/\delta)\big)}, \sqrt{2(1+bN)\big(\gamma^\text{st}_t + \ln(N/\delta)\big)} \right\}$.
In particular, we obtain \Cref{thm:multitask_conf_intervals} by considering specific combinations among the minimums involved in the definitions of $\beta_t^\textnormal{bias}(b)$ and $\beta_t^\textnormal{var}(b)$.
The resulting $\beta_t^\textnormal{small-$b$}(b)$ and $\beta_t^\textnormal{large-$b$}(b)$ are chosen to be tight when $b$ goes to $0$ or $+\infty$ respectively.
\end{proof}

\subsubsection{A Kronecker Sherman-Morrison Lemma}

We now provide a lemma which extends the Sherman-Morrison formula to kronecker matrices.
\begin{lem}\label{lem:SM_kronecker}
Let $D_1, \ldots, D_N \in \mathbb{R}^{d \times d}$ be invertible, $D = \mathrm{diag}\big(D_1, \ldots, D_N\big) \in \mathbb{R}^{Nd \times Nd}$, and $P \in \mathbb{R}^{d \times d}$ that commutes with $D_i$ for all $i \in [N]$.
Then we have
\[
\left(D + \mathbbm{1}_N\mathbbm{1}_N^\top \otimes P\right)^{-1} = D^{-1} + D^{-1} \left(\mathbbm{1}_N\mathbbm{1}_N^\top \otimes Q\right)D^{-1}\,,
\]
where $Q = - \Big(I_d + P\big(D_1^{-1} + \ldots + D_N^{-1}\big)\Big)^{-1}P = - P\Big(I_d + P\big(D_1^{-1} + \ldots + D_N^{-1}\big)\Big)^{-1}$.
\end{lem}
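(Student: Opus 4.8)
The plan is to recognize the perturbation $\mathbbm{1}_N\mathbbm{1}_N^\top \otimes P$ as a low-rank update of the block-diagonal matrix $D$ and apply the Woodbury identity. Writing $J \coloneqq \mathbbm{1}_N\mathbbm{1}_N^\top$ for brevity, the first step is to factor $J \otimes P = UV$ with $U \coloneqq \mathbbm{1}_N \otimes I_d \in \mathbb{R}^{Nd \times d}$ and $V \coloneqq \mathbbm{1}_N^\top \otimes P \in \mathbb{R}^{d \times Nd}$; indeed, the mixed-product property of the Kronecker product gives $UV = (\mathbbm{1}_N\mathbbm{1}_N^\top) \otimes (I_d P) = J \otimes P$. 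This exhibits the update as having rank at most $d$, which is what makes the inversion tractable.

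Second, I would invoke Woodbury, $(D + UV)^{-1} = D^{-1} - D^{-1}U(I_d + VD^{-1}U)^{-1}VD^{-1}$, and evaluate the $d \times d$ capacitance matrix $I_d + VD^{-1}U$. Since $D^{-1} = \mathrm{diag}(D_1^{-1}, \ldots, D_N^{-1})$, the product $D^{-1}U$ stacks the blocks $D_j^{-1}$, and left-multiplying by $V = \mathbbm{1}_N^\top \otimes P$ collapses the stack into $VD^{-1}U = P\sum_{j=1}^N D_j^{-1} \eqqcolon P S_D$. Hence the capacitance matrix is exactly $I_d + P S_D$, whose inverse is precisely the one appearing in the definition of $Q$.

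Third, I would substitute back and re-Kroneckerize the correction term: again by the mixed-product property, $U(I_d + P S_D)^{-1}V = (\mathbbm{1}_N\mathbbm{1}_N^\top) \otimes \big((I_d + P S_D)^{-1}P\big) = -\,J \otimes Q$, so that $(D + J\otimes P)^{-1} = D^{-1} + D^{-1}(J \otimes Q)D^{-1}$, which is the claimed identity. Finally, the commutativity hypothesis, namely that $P$ commutes with every $D_i$ and hence with $S_D = \sum_i D_i^{-1}$, is what lets me pass $P$ through $(I_d + P S_D)^{-1}$ and thereby obtain the second, equivalent expression $Q = -P(I_d + P S_D)^{-1}$.

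Since everything is finite-dimensional and square, establishing the identity as a one-sided inverse suffices. As a self-contained alternative that avoids quoting Woodbury, one may instead verify directly that the product of $D + J \otimes P$ with the claimed right-hand side equals $I_{Nd}$: expanding and using that the triple product $(J\otimes P)\,D^{-1}\,(J\otimes Q)$ equals $J \otimes (P S_D Q)$, the surplus terms collect into $\big[J \otimes (Q + P + P S_D Q)\big]D^{-1}$, which vanishes precisely because $(I_d + P S_D)Q = -P$ by definition of $Q$. The only mild subtlety throughout is the bookkeeping of the block/Kronecker products, in particular tracking how the all-ones pattern of $J$ forces the sum $\sum_j D_j^{-1}$ to appear in $V D^{-1} U$ (equivalently, in the triple product above); beyond this, no genuine obstacle arises.
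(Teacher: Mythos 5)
Your proposal is correct. The paper's own proof is a one-line direct verification: it simply multiplies $D + \mathbbm{1}_N\mathbbm{1}_N^\top \otimes P$ by the claimed inverse and observes the product is $I_{Nd}$, leaving the block bookkeeping to the reader. Your primary route is different in spirit: by factoring $\mathbbm{1}_N\mathbbm{1}_N^\top \otimes P = (\mathbbm{1}_N \otimes I_d)(\mathbbm{1}_N^\top \otimes P)$ and invoking Woodbury, you \emph{derive} the form of $Q$ from the capacitance matrix $I_d + P\sum_j D_j^{-1}$ rather than pulling it out of thin air and checking it; this is more informative (it explains why the correction is $\mathbbm{1}_N\mathbbm{1}_N^\top \otimes Q$ and where the sum $\sum_j D_j^{-1}$ comes from) at the cost of quoting Woodbury and implicitly assuming the capacitance matrix is invertible — an assumption the lemma statement already makes by defining $Q$ through that inverse. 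Your closing alternative, the direct check that the surplus terms collapse to $\mathbbm{1}_N\mathbbm{1}_N^\top \otimes (Q + P + PS_DQ) = 0$ via $(I_d + PS_D)Q = -P$, is exactly the computation the paper declares ``immediate,'' so you have in effect supplied both the paper's argument and a constructive refinement of it. Your handling of the commutativity hypothesis (needed only to pass $P$ through $(I_d + PS_D)^{-1}$ and obtain the second expression for $Q$) is also correct.
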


\begin{proof}
It is immediate to check that $\left(D + \mathbbm{1}_N\mathbbm{1}_N^\top \otimes P\right) \left(D^{-1} + D^{-1} \left(\mathbbm{1}_N\mathbbm{1}_N^\top \otimes Q\right)D^{-1}\right) = \left(D^{-1} + D^{-1} \left(\mathbbm{1}_N\mathbbm{1}_N^\top \otimes Q\right)D^{-1}\right)\left(D + \mathbbm{1}_N\mathbbm{1}_N^\top \otimes P\right) = I_{Nd}$.
\end{proof}

\section{New Guarantees for Online Learning}\label{app:online_learning}
\par{\bf Details on the independent regret bound.}
We analyze the regret of the strategy which runs $N$ independent instances of IGP-UCB \citep{chowdhury2017}, one per task.
Choosing the single-task confidence width $\beta^\text{st}_t = B + \sqrt{2\big(\gamma^\text{st}_t + \ln(N/\delta)\big)}$, one can control the individual task regrets and obtain with probability at least $1-\delta$:
\begin{align*}
\Rmt(T) &= \sum_{i=1}^N ~ \sum_{t \colon i_t = i} \max_{x \in \X} \fmt(i, x) - \fmt(i, x_t)\\
&\le 4 \sum_{i=1}^N \beta^\text{st}_{T_i}\sqrt{T_i\,\gamma^\text{st}_{T_i}}\\
&\le 6\left( B\sqrt{NT\gamma^\text{st}_T} + \sqrt{NT\gamma^\text{st}_T\big(\gamma^\text{st}_T+\ln(N/\delta)\big)}\right)\,,
\end{align*}
where we have used that $\gamma^\text{st}_{T_i} \le \gamma^\text{st}_T$ and Jensen's inequality.
We exactly recover the first bound in \Cref{thm:main_thm_regrets}.

\subsection{Proof of Lemma \ref{lem:generic_bound}}
\label{apx:jensen}

\lemgeneric*

\begin{proof}
The proof follows from standard arguments, see e.g., \cite[Theorem~1]{srinivas2010gaussian} and \cite[Theorem~3]{chowdhury2017}, reproduced here for completeness.
For, $i \in [N]$, let $x^*_i = \argmax_{x \in \X} \fmt(i, x)$.
With probability $1-\delta$, we have
\begin{align}
\sum_{t=1}^T ~ \max_{x \in \X} \fmt(i_t, x) &- \fmt(i_t, x_t)\nonumber\\
&\le \sum_{t=1}^T \text{ucb}_{t-1}(i_t, x^*_{i_t}\,|\,b) - \text{ucb}_{t-1}(i_t, x_t\,|\,b) + 2 \beta_t(b) \cdot \sigma_{t-1}(i_t, x_t\,|\,b)\nonumber\\
&\le 2 \sum_{t=1}^T \beta_t(b) \cdot \sigma_{t-1}(i_t, x_t\,|\,b)\nonumber\\
&\le 2 \beta_T(b) \sqrt{T \, \sum_{t=1}^T \sigma^2_{t-1}(i_t, x_t\,|\,b)}~.\label{eq:1}
\end{align}
The main twist is that our regularization parameter $\lambda$ might be smaller than $1$, preventing from a direct adaptation of \cite[Lemma~4]{chowdhury2017} to bound $\sum_{t=1}^T \sigma^2_{t-1}(i_t, x_t\,|\,b)$.
However, this happens not to be a problem since our multitask predictive variance are also smaller than in the single-task case.
Indeed, as long as $\lambda \ge (N + b)/(N + bN) = A(b)^{-1}_{ii}$, we have for all $i$ and $x$
\begin{align*}
\sigma^2_t(i, x) &= k\big((i, x), (i, x)\big) - \bm{k}_t(i, x)^\top(K_t + \lambda I_t)^{-1} \bm{k}_t(i, x)\\
&\le k\big((i, x), (i, x)\big)\\
&= A^{-1}_{ii}~k_\mathcal{X}(x, x)\\
&\le \lambda\,.
\end{align*}
Therefore, we have
\begin{equation}\label{eq:2}
\sum_{t=1}^T \sigma_{t-1}^2(i_t, x_t) = \lambda \sum_{t=1}^T \lambda^{-1}\sigma_{t-1}^2(i_t, x_t) \le 2\lambda \sum_{t=1}^T \ln\big(1 + \lambda^{-1}\sigma_{t-1}^2(i_t, x_t)\big) \le 4\lambda\,\gamma^\text{mt}_T(b)\,,
\end{equation}
where we have used that $x \le \ln(1+x)$ for any $x \in [0, 1]$, applied to the $\lambda^{-1}\sigma_{t-1}^2(i_t, x_t) \le 1$, and \citep[Lemma~3]{chowdhury2017}.
Substituting \eqref{eq:2} into \eqref{eq:1} concludes the proof.
\end{proof}

\subsection{Proof of Proposition \ref{prop:bound_info_gain}}

\propinfogain*

\begin{proof}
Recall that the multitask kernel writes $k\big((i,x), (i',x')\big) = k_\T(i,i') \cdot k_\X(x, x')$.
Hence, the multitask Gram matrix $K_T$ can be written as $K_T = K_\T \odot K_\X$, where $K_\T, K_\X \in \mathbb{R}^{T \times T}$ are the task (respectively domain) Gram matrices.
Moreover, up to rearranging the rows and columns of $K_\T$ (which does not change the determinant), we can assume that points are ordered by task activations.
Let $T_i = \sum_{t=1}^T \mathbbm{I}\{i_t = i\}$ be the number of times task $i$ has been queried.
We have
\begin{align*}
K_\T &= \begin{pmatrix}\frac{b+N}{(1+b)N}~\mathbbm{1}_{T_1}\mathbbm{1}_{T_1}^\top && \frac{b}{(1+b)N}~\mathbbm{1}_{T_1}\mathbbm{1}_{T - T_1}^\top \\ & \ddots & \\ \frac{b}{(1+b)N}~\mathbbm{1}_{T_N}\mathbbm{1}_{T - T_N}^\top && \frac{b+N}{(1+b)N}~\mathbbm{1}_{T_N}\mathbbm{1}_{T_N}^\top\end{pmatrix}\\[0.5cm]
&= \frac{b}{1+b}~\frac{\mathbbm{1}_{T}\mathbbm{1}_{T}^\top}{N} + \frac{1}{1+b} \begin{pmatrix}\mathbbm{1}_{T_1}\mathbbm{1}_{T_1}^\top &&0\\ & \ddots & &\\ 0 &&\mathbbm{1}_{T_N}\mathbbm{1}_{T_N}^\top\end{pmatrix}\,.
\end{align*}
We also introduce the block notation $K_\X^{(i, j)} \in \mathbb{R}^{T_i \times T_j}$ and $K_\X^\text{diag} \in \mathbb{R}^{T \times T}$ such that
\[
K_\X = \begin{pmatrix}&&\\&K_\X^{(i, j)}&\\&&\end{pmatrix}\,, \qquad \text{and} \qquad K_\X^\text{diag} = \begin{pmatrix}K_\X^{(1, 1)}&&0\\&\ddots&\\0&&K_\X^{(N, N)}\end{pmatrix}\,.
\]
Our bounds are based on the observation that $M \mapsto \ln|M|$ is a concave function, and therefore upper bounded by its first order Taylor approximation.
In other words, for any positive semi-definite matrices $X$ and $Y$ we have
\begin{equation}\label{eq:logdet}
\ln|X| \le \ln|Y| + \Tr\big(Y^{-1}(X-Y)\big)\,.
\end{equation}
Hence, for any $ \ge 0$ we have
\begin{align}
2\gamma^\text{mt}_T(b) &= \ln\big|I_T + \lambda^{-1}K_T\big|\nonumber\\
&= \ln\big|I_T + \lambda^{-1}K_\T \odot K_\X\big|\nonumber\\
&= \ln\left|I_T + \lambda^{-1}\left(\frac{b}{1+b}\frac{K_\X}{N} + \frac{1}{1+b}K_\X^\text{diag}\right)\right|\nonumber\\
&= \ln\left|\frac{b}{1+b}\left(I_T + \lambda^{-1}\frac{K_\X}{N}\right) + \frac{1}{1+b}\left(I_T + \lambda^{-1} K_\X^\text{diag}\right)\right|\nonumber\\
&\le \ln\left|\frac{1}{1+b}\left(I_T + \lambda^{-1} K_\X^\text{diag}\right)\right|\nonumber\\
&\quad+ \Tr\left((1+b)\left(I_T + \lambda^{-1} K_\X^\text{diag}\right)^{-1}\frac{b}{1+b}\left(I_T + \lambda^{-1}\frac{K_\X}{N}\right)\right)\label{eq:log_det_cvx1}\\
&= \ln\left|I_T + \lambda^{-1} K_\X^\text{diag}\right| -T\ln(1+b) + b\,\Tr\left(\left(I_T + \lambda^{-1} K_\X^\text{diag}\right)^{-1}\left(I_T + \lambda^{-1}\frac{K_\X}{N}\right)\right)\nonumber\\
&\le 2 \sum_{i=1}^N \gamma^\text{st}_{T_i} + b\,\Tr\left(\left(I_T + \lambda^{-1} K_\X^\text{diag}\right)^{-1}\left(I_T + \lambda^{-1}\frac{K_\X}{N}\right)\right) - T\ln(1+b)\,,\label{eq:bound_logdet1}
\end{align}
where \eqref{eq:log_det_cvx1} comes from \eqref{eq:logdet} applied with $X = \frac{b}{1+b}\left(I_T + \lambda^{-1}\frac{K_\X}{N}\right) + \frac{1}{1+b}\left(I_T + \lambda^{-1} K_\X^\text{diag}\right)$ and $Y=\frac{1}{1+b}\left(I_T + \lambda^{-1} K_\X^\text{diag}\right)$.
We now take a closer look on the second term.
We have
\begin{align}
\Tr\bigg(\Big(I_T &+ \lambda^{-1} K_\X^\text{diag}\Big)^{-1}\left(I_T + \lambda^{-1}\frac{K_\X}{N}\right)\bigg)\nonumber\\
&= \Tr\left(\begin{pmatrix}\Big(I_{T_1} + \lambda^{-1} K_\X^{(1, 1)}\Big)^{-1}&&0\\&\ddots&\\0&&\Big(I_{T_N} + \lambda^{-1} K_\X^{(N, N)}\Big)^{-1}\end{pmatrix}\left(I_T + \lambda^{-1}\frac{K_\X}{N}\right)\right)\nonumber\\
&= \sum_{i=1}^N \Tr\left(\Big(I_{T_i} + \lambda^{-1} K_\X^{(i, i)}\Big)^{-1}\left(I_{T_i} + \lambda^{-1}\frac{K_\X^{(i, i)}}{N}\right)\right)\nonumber\\
&= \sum_{i=1}^N \sum_{\tau=1}^{T_i} \frac{\lambda + \sigma^{(i)}_\tau/N}{\lambda + \sigma^{(i)}_\tau}\label{eq:eigen_values}
\end{align}
where $\big\{\sigma_\tau^{(i)}\big\}_{\tau \le T_i}$ are the eigenvalues of $K_\X^{(i, i)}$, possibly equal to $0$.
For any $i$, let $F^{(i)}\colon \mathbb{R}^{T_i} \rightarrow \mathbb{R}$ the functions which to any $\bm{\sigma} = (\sigma_1, \ldots, \sigma_{T_i})$ associates $F^{(i)}(\bm{\sigma}) = \sum_{\tau=1}^{T_i} (\lambda + \sigma_\tau/N)/(\lambda + \sigma_\tau)$.
For any $\tau_0, \tau_1$, we have
\begin{align*}
(\sigma_{\tau_0} - &\sigma_{\tau_1})\left(\frac{\partial F^{(i)}(\bm{\sigma})}{\partial \sigma_{\tau_0}} - \frac{\partial F^{(i)}(\bm{\sigma})}{\partial \sigma_{\tau_1}}\right)\\
&= (\sigma_{\tau_0} - \sigma_{\tau_1})\left(\frac{(1/N)(\lambda + \sigma_{\tau_0}) - (\lambda + \sigma_{\tau_0}/N)}{(\lambda + \sigma_{\tau_0})^2} - \frac{(1/N)(\lambda + \sigma_{\tau_1}) - (\lambda + \sigma_{\tau_1}/N)}{(\lambda + \sigma_{\tau_1})^2}\right)\\
&= \lambda\,\frac{N-1}{N}(\sigma_{\tau_1} - \sigma_{\tau_0})\left(\frac{1}{(\lambda + \sigma_{\tau_0})^2} - \frac{1}{(\lambda + \sigma_{\tau_1})^2}\right)\\
&\ge 0\,,
\end{align*}
such that $F^{(i)}$ is Schur-convex.
Hence, \eqref{eq:eigen_values} is maximized at eigenvalues of the form $(T_i, 0, \ldots, 0)$, since the latter majorizes any other admissible distribution of the eigenvalues (recall that we must have $\sigma_\tau^{(i)} \ge 0$ and $\sum_{\tau=1}^{T_i} \sigma^{(i)}_\tau = \Tr\big(K_\X^{(i, i)}\big) \le T_i$\,), with value
\[
\sum_{i=1}^N \, (T_i - 1) + \frac{\lambda + T_i/N}{\lambda + T_i} \le T - N + N \, \frac{1+1/N}{2} \le T - \frac{N}{4}\,,
\]
where we have used that $\lambda \le 1$, $T_i \ge 1$, and $N \ge 2$.
Substituting into \eqref{eq:bound_logdet1}, we finally obtain
\begin{equation}\label{eq:bound_logdet}
\gamma_T^\text{mt}(b) \le N \gamma_T^\text{st} + \frac{b}{2}\left(T - \frac{N}{4}\right) - \frac{T}{2}\ln(1+b)\,.
\end{equation}
\bigskip

The second bound is obtained by modifying \eqref{eq:log_det_cvx1}.
Instead, we now apply \eqref{eq:logdet} with $X = \frac{b}{1+b}\left(I_T + \lambda^{-1}\frac{K_\X}{N}\right) + \frac{1}{1+b}\left(I_T + \lambda^{-1} K_\X^\text{diag}\right)$ and $Y=\frac{b}{1+b}\left(I_T + \lambda^{-1}\frac{K_\X}{N}\right)$.
We obtain
\begin{align}
2\gamma^\text{mt}_T(b) & \le \ln\left|\frac{b}{1+b}\left(I_T + \lambda^{-1}\frac{K_\X}{N}\right)\right|\nonumber\\
&\quad+ \Tr\left(\frac{1+b}{b}\left(I_T + \lambda^{-1}\frac{K_\X}{N}\right)^{-1}\frac{1}{1+b}\Big(I_T + \lambda^{-1}{K_\X}^\text{diag}\Big)\right)\nonumber\\
&=\ln\left|I_T + \lambda^{-1}\frac{K_\X}{N}\right| - T\ln\left(1+\frac{1}{b}\right) + \frac{1}{b}\,\Tr\left(\left(I_T + \lambda^{-1}\frac{K_\X}{N}\right)^{-1}\left(I_T + \lambda^{-1}K_\X^\text{diag}\right)\right)\nonumber\\
&\le 2\,\gamma_T^\text{st} + \frac{1}{b} \Tr\left(I_T + \lambda^{-1}K_\X^\text{diag}\right)\label{eq:von_n}\\
&= 2\,\gamma_T^\text{st} + \frac{(1 + \lambda^{-1})T}{b}\nonumber\\
&\le 2\,\gamma_T^\text{st} + 2\,\frac{T}{\lambda b}\,,\nonumber
\end{align}
where \eqref{eq:von_n} comes from von Neumann's trace inequality and is tight when $N \rightarrow +\infty$.
\end{proof}

\subsection{Proof of Theorem \ref{thm:main_thm_regrets}}

\thmmain*

\begin{proof}
From \Cref{lem:generic_bound} and the choice $\beta_t = \beta^\text{new}_t$, we have with probability at least $1-2\delta$
\[
\Rmt(T) \le 4\,\beta^\text{new}_T(b) \sqrt{\lambda\,T\, \gamma_T^\textnormal{mt}(b)}\,.
\]

\paragraph{First bound, recovering independent learning.}
Hence, in particular, we have
\begin{align*}
\Rmt(T) &\le 4\,\beta^\text{small-$b$}_T(b) \sqrt{\lambda\,T\, \gamma_T^\textnormal{mt}(b)}\\
&= 4\left(B(1+b\epsilon)\sqrt{\frac{1+bN}{1+b}} + \lambda^{-1/2} \sqrt{2(1+bN)\big(\gamma_T^\textnormal{st} + \ln(N/\delta)\big)}\right) \sqrt{\lambda\,T\, \gamma_T^\textnormal{mt}(b)}\\
&\le 6\left(B\,\frac{1+b\epsilon}{1+b}\sqrt{\frac{(1+bN)(b+N)}{N}} + \sqrt{(1+bN)\big(\gamma_T^\textnormal{st} + \ln(N/\delta)\big)}\right) \sqrt{T\, \big(N\gamma^\text{st}_T + bT\big)}\,,
\end{align*}
where we have used the first claim of \Cref{prop:bound_info_gain} and the choice $\lambda = (N + b)/(N+bN)$.
Substituting $b=0$ in the above equation, we recover the independent learning bound, i.e.,
\begin{equation}\label{eq:bound_b0}
\Rmt(T) \le 6\left(B\sqrt{NT\gamma^\text{st}_T} + \sqrt{NT\gamma_T^\textnormal{st}\big(\gamma_T^\textnormal{st} + \ln(N/\delta)\big)}\right)\,.
\end{equation}
Hence, even in the least favorable cases, the multitask approach allows to recover the independent baseline by using $b=0$.
Note that this is only made possible by the fact that $\beta_t^\text{small-$b$}$ is tight at $b=0$.

\paragraph{A first bound for small $\epsilon$.}
Here, we instead use the bound
\begin{align*}
\Rmt(T) &\le 4\,\beta^\text{naive}_T(b) \sqrt{\lambda\,T\, \gamma_T^\textnormal{mt}(b)}\\
&= 4\left(B\sqrt{N(1 + b\epsilon^2)} + \lambda^{-1/2}\sqrt{2\big(\gamma_T^\text{mt}(b) + \ln(1/\delta)\big)}\right) \sqrt{\lambda\,T\, \gamma_T^\textnormal{mt}(b)}\\
&\le 4\left(B\sqrt{\frac{(1+b\epsilon^2)(b+N)}{1+b}} + \sqrt{2\left(\gamma_T^\textnormal{st} + \frac{NT}{b} + \ln(1/\delta)\right)}~\right) \sqrt{T\left(\gamma^\text{st}_T + \frac{NT}{b}\right)}\,,
\end{align*}
where we have used the second claim of \Cref{prop:bound_info_gain} and the choice $\lambda = (N+b)/(N+bN)$.
Substituting $b=1/\epsilon^2$ in the above equation, we obtain
\begin{align}
\frac{\Rmt(T)}{6} &\le \left(B\sqrt{\frac{N+1/\epsilon^2}{1+1/\epsilon^2}} + \sqrt{\gamma_T^\textnormal{st} + \epsilon^2NT + \ln(1/\delta)}\right) \sqrt{T\left(\gamma^\text{st}_T + \epsilon^2NT\right)}\nonumber\\
&\le \left(B\big(1 + \epsilon \sqrt{N}\big) + \sqrt{\gamma_T^\textnormal{st} + \epsilon^2NT + \ln(1/\delta)}\right) \sqrt{T\left(\gamma^\text{st}_T + \epsilon^2NT\right)}\nonumber\\
&\le \sqrt{T\gamma_T^\text{st}}\Big(B + \sqrt{\gamma_T^\text{st} + \ln(1/\delta)}\Big) + \epsilon\bigg[B\sqrt{NT\gamma_T^\text{st}} + T\sqrt{N\gamma_T^\text{st}} + B\sqrt{N}T\nonumber\\
&\hspace{5.6cm}  + \epsilon BNT + \epsilon NT^{3/2} + T\sqrt{N\big(\gamma_T^\textnormal{st} + \ln(1/\delta)\big)}\bigg]\nonumber\\
&= \sqrt{T\gamma_t^\text{st}}\Big(B + \sqrt{\gamma_T^\text{st} + \ln(1/\delta)}\Big) + \mathcal{O}\left(\epsilon\,B T \sqrt{N \big(\gamma_T^\text{st} + \ln(1/\delta)\big)} + \epsilon^2 BNT^{3/2}\right)\,.\label{eq:bound_bepsilon}
\end{align}
Bound \eqref{eq:bound_bepsilon} is composed of two terms: the single-task regret, and an additional term which scales with the deviation $\epsilon$ to the average task.
Hence, as $\epsilon$ goes to $0$, i.e., when tasks get more similar, we adaptively recover the single-task bound.
Moreover, note that \eqref{eq:bound_bepsilon} is smaller than \eqref{eq:bound_b0} as long as $\epsilon \le 1/(N^{1/4}\sqrt{T})$.
Hence, by choosing $b = (1/\epsilon^2) \cdot \mathbbm{1}\{\epsilon \le 1/(N^{1/4}\sqrt{T})\}$, we can obtain the minimum of the two bounds.

\paragraph{A second bound for small $\epsilon$.}
Finally, we can use that
\begin{align*}
&\Rmt(T)\\
&~\le 4\,\beta^\text{large-$b$}_T(b) \sqrt{\lambda\,T\, \gamma_T^\textnormal{mt}(b)}\\
&~= 4\left(B \sqrt{\frac{(1+b\epsilon)^2}{1+b} +  \frac{2bN}{1+b} + \frac{2b(1+b\epsilon)^2}{N\lambda^2(1+b)^3}\,T^2} + \lambda^{-1/2} \sqrt{2\big(\gamma_T^\textnormal{mt}(b) + \ln(1/\delta)\big)}\right) \sqrt{\lambda\,T\, \gamma_T^\textnormal{mt}(b)}\\
&~\le 6\left(B\sqrt{\frac{(1+b\epsilon)^2(b+N)}{(1+b)^2N} + \frac{2b(b+N)}{(1+b)^2}\left(1+\frac{(1+b\epsilon)^2}{(b+N)^2}T^2\right)} + \sqrt{\gamma_T^\textnormal{st} + \frac{NT}{b} + \ln(1/\delta)}~\right)\\
&\hspace{1cm}\cdot\sqrt{T\left(\gamma^\text{st}_T + \frac{NT}{b}\right)}\,,
\end{align*}
where we have used the second claim of \Cref{prop:bound_info_gain} and the choice $\lambda = (N+b)/(N+bN)$.
Choosing $b=N/\epsilon^2$, we have
\[
\frac{(1+b\epsilon)^2(b+N)}{(1+b)^2N} = \frac{\left(1+\frac{N}{\epsilon}\right)^2\left(1+\frac{1}{\epsilon^2}\right)}{\left(1+\frac{N}{\epsilon^2}\right)^2} \le \frac{2\left(1 + \frac{N^2}{\epsilon^2}\right)}{\left(1+\frac{N}{\epsilon^2}\right)} \, \frac{1+\epsilon^2}{N+\epsilon^2} \le 2N \frac{5}{N} = 10\,,
\]
\[
\frac{2b(b+N)}{(1+b)^2} \le \frac{2(b+N)}{(1+b)} = 2\,\frac{N + \frac{N}{\epsilon^2}}{1 + \frac{N}{\epsilon^2}} = 2N \frac{1 + \epsilon^2}{N + \epsilon^2} \le 10\,,
\]
\[
\frac{1+b\epsilon}{b+N} = \frac{b\epsilon + N\epsilon - N\epsilon + 1}{b+N} \le \epsilon + \frac{1}{\frac{N}{\epsilon^2} + N} \le \epsilon + \frac{\epsilon^2}{N} \le 3 \epsilon\,.
\]
Substituing in the above bound, we obtain
\begin{align}
\Rmt(T) &= \mathcal{O}\left(B\sqrt{1 + \epsilon^2 T^2} + \sqrt{\gamma_T^\textnormal{st} + \epsilon^2T + \ln(1/\delta)}\right)\sqrt{T\left(\gamma^\text{st}_T + \epsilon^2 T\right)}\nonumber\\
&= \mathcal{O}\Bigg(\sqrt{T \gamma_T^\textnormal{st}}\left(B + \sqrt{\gamma_T^\textnormal{st} + \ln(1/\delta)}\right) + \epsilon B T^{3/2}\sqrt{\gamma_T^\textnormal{st}}\nonumber\\
&\qquad\quad + \epsilon T \bigg(B\sqrt{1 + \epsilon^2 T^2} + \sqrt{\gamma_T^\textnormal{st} + \epsilon^2T + \ln(1/\delta)}\bigg) \Bigg)\nonumber\\
&=\mathcal{O}\left(\sqrt{T \gamma_T^\textnormal{st}}\left(B + \sqrt{\gamma_T^\textnormal{st} + \ln(1/\delta)}\right) + \epsilon B T^{3/2}\sqrt{\gamma_T^\textnormal{st} + \ln(1/\delta)} + \epsilon^2 B T^2\right)\,.\label{eq:bound_bepsilon2}
\end{align}
Again, bound \eqref{eq:bound_bepsilon2} is composed of two terms: the single-task regret, and an additional term which goes to $0$ as $\epsilon$ goes to $0$.
Interestingly, this last part is independent from $N$, which is a consequence of $\beta_t^\text{large-$b$}$ being $\sqrt{N}$ smaller than $\beta_t^\text{naive}$ at $b = +\infty$.
For small values of $T$, namely when $T \le N$, \eqref{eq:bound_bepsilon2} is thus smaller than \eqref{eq:bound_bepsilon}.
Choosing $b = N/\epsilon^2$ when $T \le N$, and as before otherwise, ensures to obtain the minimum of \eqref{eq:bound_b0}, \eqref{eq:bound_bepsilon}, and \eqref{eq:bound_bepsilon2}.
\end{proof}

\subsection{Adapting to unknown tasks' similarity}\label{app:adamtucb}

In~\Cref{alg:adamt-ucb} we summarize the \texttt{AdaMT-UCB} approach discussed in~\Cref{sec:active_learning}. In the misspecification test (Line 8), $\text{lcb}_t^e$ are lower confidence bound functions, defined as:
\begin{equation*}
\text{lcb}^e_t(i,x) = \mu_t\big(i,x \,|\, b^e \big) - \beta_t(b^e) \cdot \sigma_t\big(i,x \,|\, b^e\big)\,,
\end{equation*}
where $b^e$ is the kernel parameter chosen (according to~\Cref{thm:main_thm_regrets}) by each learner $e$. Moreover,
$c$ is an absolute constant such that, by standard concentration arguments and for all times $\tau$,
\begin{equation}\label{eq:concentration_condition}
    \left|U - \sum_{t=1}^\tau\fmt(i_t,x_t) \right| \leq c\sqrt{\tau\ln(\ln(\tau)/\delta)},
\end{equation}
with probability $1-\delta$, see, e.g.,~\citep[Lemma B.1]{Pacchiano2020RegretBB}.

\begin{algorithm}[t]
\caption{\texttt{AdaMT-UCB}}\label{alg:adamt-ucb}
\begin{algorithmic}[1]
\REQUIRE Finite set $\mathcal{E}\subset (0,2]$, learning rate $\eta>0$.
\STATE For $e \in \mathcal{E}$, initialize learner $ \texttt{MT-UCB}({e})$ with $b,\lambda,\{\beta_t\}_{t\in \mathbb{N}}$ according to Theorem~\ref{thm:main_thm_regrets} using $\epsilon=e$.
\STATE Initialize $\tau=U=R=0$, $L^e = 0$, $\forall e \in \mathcal{E}$.%and weights $\mathbf{w}_0 = [1,\ldots,1] \frac{1}{|\mathcal{E}|}$.
\FOR{t=1,\ldots, }{} 
\STATE Choose learner $e_t = \min_{\epsilon\in \mathcal{E}}$.% and compute actions for each $ \texttt{MT-UCB}({e})$: $x_t^e = \arg\max_{x\in \X} \text{ucb}_{t-1}^{e}(i_t, x)$.
\STATE Observe $i_t$ and play action from \texttt{MT-UCB}(${e}_t$), i.e. $x_t  = \arg\max_{x\in \X} \text{ucb}_{t-1}^{e_t}(i_t, x)$.
    \STATE Observe: $y_t = \fmt(i_t, x_t) + \xi_t$, and update \emph{all} learners $\{ \texttt{MT-UCB}({e}) \}_{e \in \mathcal{E}}$ based on observation.
 \STATE Accumulate: 
 \vspace{-.5em}
 \begin{equation*}
    \tau \mathrel{+}= 1, \quad U \mathrel{+}= y_t, \quad R \mathrel{+}= 2\beta_{t-1}^{e_t}\sigma_{t-1}^{e_t}(i_t, x_t), \quad L^e \mathrel{+}= \text{lcb}_{t-1}^e(i_t, x_t), \forall e\in \mathcal{E}.
 \end{equation*}
 \STATE Misspecification test: 
 \vspace{-1em}
 \begin{equation}\label{eq:misspecification_test}
     U + R + c\sqrt{\tau\ln(\ln(\tau)/\delta)} < \max_{e\in \mathcal{E}}L^e
 \end{equation}
 \IF{\text{condition~\eqref{eq:misspecification_test} is true}} \STATE \textcolor{gray}{\# At least one learner is misspecified w.h.p.. Hence, start a new epoch.}
    \STATE $\mathcal{E} = \mathcal{E}\setminus \{e_t\}$ and
    %\STATE Re-initialize weights $\mathbf{w}_t = [1,\ldots,1] \frac{1}{|\mathcal{E}|}$
    reset $\tau=U=R=0$, $L^e = 0, \forall e\in \mathcal{E}$.%, and weights $\mathbf{w}_t = [1,\ldots,1] \frac{1}{|\mathcal{E}|}$.
    %\ELSE
    %\STATE Update weights: 
    %\begin{equation}
    % \mathbf{w}_{t}[i] = \mathbf{w}_{t-1}[i] \cdot \exp\left( -\eta \cdot \beta_{t-1}^{e_i} \sigma_{t-1}^{e_i}(i_t, x_t^{e_i}) \right), \quad i=1,\ldots, |\mathcal{E}|
    %\end{equation}
    \ENDIF
\ENDFOR
\end{algorithmic}
\end{algorithm}

\subsubsection{Proof of Theorem \ref{cor:adamt-ucb}}

\thmadamtucb*

Among the set of learners defined by parameters $e\in \mathcal{E}$, we have identified with $e^\star$ the (well-specified) learner with the smallest $e$ such that $e \geq \varepsilon$. Then, our goal is to obtain a regret bound which grows as the regret of learner $e^\star$. We have denoted with $\overline{\Rmt_\star}(T)$ the regret bound of such learner had it been chosen from time $0$.\looseness=-1

We first prove the following auxiliary lemma, which is the analog of~\citep[Theorem~7.1]{Pacchiano2020RegretBB}.
\begin{lem} \label{lem:misspecification_does_not_trigger}
With probability at least $1-\delta$, the misspecification test in \Cref{eq:misspecification_test} does not trigger if all learners in $\mathcal{E}$ are well-specified and their confidence intervals contain $\fmt$.
\end{lem}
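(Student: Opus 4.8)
The plan is to show that, on the high-probability event where the anytime concentration bound \eqref{eq:concentration_condition} holds, the left-hand side of the misspecification test \eqref{eq:misspecification_test} always dominates its right-hand side, so the strict inequality defining the test can never be satisfied. Throughout I work within a single epoch, in which $\tau$ counts the rounds elapsed since the last reset and $U$, $R$, $L^e$ denote the running accumulators recorded in Line~7 of \Cref{alg:adamt-ucb}; the sums below are understood to run over the rounds $t$ of the current epoch, with $x_t$ the played action. Conditioning on \eqref{eq:concentration_condition}, which costs probability $\delta$ and holds simultaneously for all $\tau$, I would first lower bound $U$ by
\[
U \;\ge\; \sum_{t}\fmt(i_t,x_t) \;-\; c\sqrt{\tau\ln(\ln(\tau)/\delta)}\,.
\]

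Next I would upper bound $\max_{e\in\mathcal{E}} L^e$. The crucial structural feature here, and one of the novelties distinguishing this setup from~\citep{Pacchiano2020RegretBB}, is that every $L^e$ accumulates the lower confidence bound of learner $e$ evaluated at the \emph{same} played action $x_t$ (the one recommended by $e_t$), rather than at $e$'s own recommendation. Consequently, whenever learner $e$ is well-specified, its confidence interval contains $\fmt$, so $\text{lcb}^e_{t-1}(i_t,x_t) \le \fmt(i_t,x_t)$ for every $t$; summing over the epoch gives $L^e \le \sum_{t}\fmt(i_t,x_t)$. Since all learners in $\mathcal{E}$ are assumed well-specified, taking the maximum is legitimate and yields $\max_{e\in\mathcal{E}} L^e \le \sum_{t}\fmt(i_t,x_t)$.

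Combining the two bounds and using $R\ge 0$ (it is a sum of the nonnegative widths $2\beta^{e_t}_{t-1}\sigma^{e_t}_{t-1}(i_t,x_t)$), I obtain
\[
U + R + c\sqrt{\tau\ln(\ln(\tau)/\delta)} \;\ge\; \sum_{t}\fmt(i_t,x_t) + R \;\ge\; \sum_{t}\fmt(i_t,x_t) \;\ge\; \max_{e\in\mathcal{E}} L^e\,,
\]
which is exactly the negation of the strict inequality in \eqref{eq:misspecification_test}. Hence the test does not trigger, and since the only randomness invoked is the concentration event, the conclusion holds with probability at least $1-\delta$. I do not anticipate a genuine obstacle: the statement follows directly from the anytime concentration bound together with the fact that well-specified lower confidence bounds underestimate $\fmt$ pointwise. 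The only points requiring care are bookkeeping ones—reading the accumulators within a single epoch under a consistent time index, and invoking \eqref{eq:concentration_condition} in its uniform-over-$\tau$ form so that one $1-\delta$ event covers every round at which the test is checked. Note that the term $R$ plays no active role in this direction and is simply discarded via $R\ge 0$; it is needed only for the complementary analysis showing that the test \emph{does} trigger for misspecified learners.
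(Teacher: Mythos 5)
Your proposal is correct and follows essentially the same route as the paper: condition on the anytime concentration event \eqref{eq:concentration_condition} and use well-specifiedness to upper bound every $L^e$ by a quantity dominated by $U + R + c\sqrt{\tau\ln(\ln(\tau)/\delta)}$. The only (harmless) difference is that you bound $L^e \le \sum_t \fmt(i_t,x_t)$ directly via $\text{lcb}^e_{t-1}(i_t,x_t)\le \fmt(i_t,x_t)$ and discard $R\ge 0$, whereas the paper takes the slightly looser detour $L^e \le \sum_t \max_{x\in\X}\fmt(i_t,x)$ and then uses the accumulated widths $R$ to come back down to $\sum_t \fmt(i_t,x_t)$ — your shortcut is valid and marginally cleaner.
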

\begin{proof}
When all learners $e\in \mathcal{E}$ are well-specified and their intervals contain the true function, for all $t$ it holds $\text{lcb}_t^{e}(i_t, x_t) \leq \max_{x\in \X} \fmt(i_t,x)$. Thus, for each learner $e \in \mathcal{E}$, with probability at least $1-\delta$,
\begin{align*}
L^e &= \sum_{t=1}^{T} \text{lcb}_{t-1}^{e}(i_t, x_t)\\
&\leq \sum_{t=1}^{T} \max_{x\in \X}\fmt(i_t,x)\\
&\leq  \sum_{t=1}^T \fmt(i_t, x_t) +  2\beta_{t-1}^{e_t} \sigma_{t-1}^{e_t}(i_t, x_t) \\ 
& \leq U +  c\sqrt{T\ln(\ln(T)/\delta)} + R\,,
\end{align*}
where, in addition to~\Cref{eq:concentration_condition}, we have used that $\sum_{t} \max_{x\in \X} \fmt(i_t, x) - \fmt(i_t, x_t) \leq \sum_{t} \ucb_t(i_t, x_t) - \text{lcb}_t(i_t, x_t) = \sum_{t} 2 \beta_{t-1}^{e_t}\sigma_{t-1}^{e_t}(i_t, x_t)$. Thus, the misspecification test of \eqref{eq:misspecification_test} does not trigger.
\end{proof}

Let us now bound the overall regret of \texttt{AdaMT-UCB}. First, we can decompose it into the regrets inside each epoch:
\begin{align*}
    \Rmt(T) & = \sum_{t=1}^T \max_{x\in \mathcal{X}} \fmt(i_t, x) - \fmt(i_t, x_t) \\ 
    & = \sum_{s=1}^{\text{$\#$ of Epochs}} \sum_{t\in \text{Epoch-}s} \max_{x\in \mathcal{X}} \fmt(i_t, x) - \fmt(i_t, x_t) \\ 
    & = \sum_{s=1}^{\text{$\#$ of Epochs}}  \Rmt_s(T_s)\,,
\end{align*}
where we have defined $T_s$ to be the duration of epoch $s$ and $\Rmt_s(T_s)$ its corresponding regret. Note that by Lemma~\ref{lem:misspecification_does_not_trigger}, the maximum number of terminated epochs corresponds to the number of misspecified learners in the initial set $\mathcal{E}$. Thus, letting $M$ be such number, with high probability:
\begin{equation}\label{eq:regret_bound_epochs}
    \Rmt(T) \leq \sum_{s=1}^{M+1}  \Rmt_s(T_s)\,.
\end{equation}

\paragraph{During each epoch.}
Let us now look at what happens during each epoch $s$. For simplicity, we will condition on the event that the intervals of learner $e^\star$ contain the true $\fmt$, and on the event of~\Cref{eq:concentration_condition}. Note that by definition, during each epoch the misspecification test has not triggered. In particular, this is true when testing against learner $e^\star$. That is,
%being $e_s = \min_{e\in \mathcal{E}}$ the learner utilized in epoch~$s$:
%being $\mathcal{E}$ the learners remained active up until epoch $s$: 
\begin{equation*}
    U + R + c\sqrt{T_s\ln(\ln(T_s)/\delta)}\geq \sum_{t\in \text{Epoch-$s$}} \text{lcb}_{t-1}^{e^\star}(i_t, x_t)\,,
\end{equation*}
which, by letting $e_s = \min_{e\in \mathcal{E}}$ be the learner utilized in epoch~$s$, implies:
\begin{equation}\label{eq:misspec_test_not_triggered}
    \sum_{t\in \text{Epoch-$s$}} \fmt(i_t, x_t) +  2\beta_{t-1}^{e_s} \sigma_{t-1}^{e_s}(i_t, x_t)  + 2c\sqrt{T_s\ln(\ln(T_s)/\delta)}\geq \sum_{t\in \text{Epoch-$s$}} \text{lcb}_{t-1}^{e^\star}(i_t, x_t)\,.
\end{equation}

%Moreover, the MW rule for selecting $e_t$ guarantees that with high probability
%\begin{equation}\label{eq:mw_balancing}
%    \sum_{t\in \text{Epoch-$s$}} 2\beta_{t-1}^{e_t} \sigma_{t-1}^{e_t}(i_t, x_t^{e_t}) \leq \min_e \sum_{t\in \text{Epoch-$s$}} 2\beta_{t-1}^{e} \sigma_{t-1}^{e}(i_t, x_t^{e})+ \mathcal{O}(\sqrt{T_s\log |\mathcal{E}|}).
%\end{equation}

Then, using the above condition:
\begin{align*}
    &\hspace{-3.5cm}\Rmt_s(T_s) - 2c\sqrt{T_s\ln\frac{\ln(T_s)}{\delta}}\\
    &=  \sum_{t\in \text{Epoch-$s$}} \max_{x\in \mathcal{X}} \fmt(i_t, x) - \fmt(i_t, x_t) - 2c\sqrt{T_s\ln\frac{\ln(T_s)}{\delta}} \\
   \text{(By Eq.~\eqref{eq:misspec_test_not_triggered})} \qquad & \leq \sum_{t\in \text{Epoch-$s$}} \max_{x\in \mathcal{X}} \fmt(i_t, x) - \text{lcb}^{e^\star}_{t-1}(i_t,x_t)  + \sum_{t\in \text{Epoch-$s$}} 2\beta_{t-1}^{e_s} \sigma_{t-1}^{e_s}(i_t, x_t)\\ 
    \text{($e^\star$ is well-specified)} \qquad & \leq \sum_{t\in \text{Epoch-$s$}} 2 \beta_{t-1}^{e^\star} \sigma^{e^\star}_{t-1}(i_t,x_t)  + \sum_{t\in \text{Epoch-$s$}} 2\beta_{t-1}^{e_s} \sigma_{t-1}^{e_s}(i_t, x_t) \\ 
   %\text{(By Eq.~\eqref{eq:mw_balancing})} \quad & \leq \sum_{t\in \text{Epoch-$s$}} 2 \beta_{t-1}^{e^\star} \sigma^{e^\star}_{t-1}(i_t,x_t^{e^\star}) + \sum_{t\in \text{Epoch-$s$}} 2 \beta_{t-1}^{e^\star} \sigma^{e^\star}_{t-1}(i_t,x_t^{e^\star}) + \mathcal{O}(\sqrt{ T_s \log |\mathcal{E}|}) \\ 
   %& \leq \sum_{t\in \text{Epoch-$s$}} 2 \beta_{t-1}^{e^\star} \sigma^{e^\star}_{t-1}(i_t,x_t)  + \sum_{t\in \text{Epoch-$s$}} 2 \beta_{t-1}^{e^\star} \sigma^{e^\star}_{t-1}(i_t,x_t) \\ 
   & \leq 4\,\beta_{T_s}(b^{e^\star}) \sqrt{\lambda(b^{e^\star})\,T_s\, \gamma_{T_s}^{\textnormal{mt}}(b^{e^\star})} + 4\,\beta_{T_s}(b^{e_s}) \sqrt{\lambda(b^{e_s})\,T_s\, \gamma_{T_s}^{\textnormal{mt}}(b^{e_s})} \\ 
   & \leq 2 \overline{\Rmt_\star}(T_s)\,.
\end{align*}
In the last two inequalities, we have used the same proof steps of~\Cref{lem:generic_bound} to bound the sum of confidence widths for learners $e^\star$ and $e_s$, and finally, the fact that $e_s \leq e^\star$ and thus the regret bound of learner $e_s$ is bounded by $\overline{\Rmt_\star}(T_s)$ (since the bound from~\Cref{thm:main_thm_regrets} increases with $\epsilon$).

Overall, combining the latter with~\Cref{eq:regret_bound_epochs}, we obtain
\begin{align*}
    \Rmt(T) &\leq \sum_{s=1}^{M+1} 2 \overline{\Rmt_\star}(T_s) + 2c\sqrt{T_s\ln\frac{\ln(T_s)}{\delta}}\\
    & = 2 \sum_{s=1}^{M+1}  C^\star(T_s) \sqrt{T_s} + c\sqrt{T_s\ln\frac{\ln(T_s)}{\delta}} \\
    & \leq 2 C^\star(T) \sqrt{T} \sqrt{M+1} + 2c\sqrt{(M+1)T\ln\frac{\ln(T)}{\delta}} \\ 
    & = 2 \overline{\Rmt_\star}(T) \sqrt{M+1} + 2c\sqrt{(M+1)T\ln\frac{\ln(T)}{\delta}}\,.
\end{align*}
where we have use the fact that \texttt{MT-UCB} regret bounds are of the form $\overline{\Rmt_\star}(T_s) = C^\star(T_s)\sqrt{T_s}$ for some appropriate function $C^\star(T_s)$, see~Theorem~\ref{thm:main_thm_regrets}, and Jensen's inequality. \hfill \qed

\par{\bf How many learners are needed?}
Let $\mathcal{E}$ be the exponential grid $\big\{1, \rho, \rho^2, \ldots, \rho^{M-1}\big\}$, for some $\rho < 1$.
Let $\epsilon \in [0, 2]$ be the true tasks similarity parameter.
By definition, the best learner is better than the learner $m^*$ such that $\rho^{m^*+1} \le \epsilon \le \rho^{m^*}$.
The estimate it uses for $\epsilon$ is better than $\epsilon^* \coloneqq \rho^{m^*}$, which satisfies $\epsilon / \epsilon^* \in [\rho, 1]$.
Hence, the bigger $\rho$, the more precise we are.
However, the number of learners needed also increases with $\rho$.
Indeed, if we want to be able to identify up to $\epsilon_\text{min}$, we have to choose $M$ such that
\[
\rho^{M-1} \le \epsilon_\text{min} \qquad \text{or again} \qquad M \ge 1 + \frac{\log (1/\epsilon_\text{min})}{\log (1/\rho)} \,.
\]

\section{Active Learning}\label{app:active_learning}
\subsection{Proof of Theorem~\ref{thm:active_singlecluster}}

\thmactivelearning*

\begin{proof}
Let $x_\star^i \in \arg\max_{x \in \mathcal{X}}\fmt(i,x)$. Then, the active learning regret of \textnormal{\texttt{MT-AL}} can be bounded as
\begin{align*}
    \Rmt_\textnormal{AL}(T) & := \sum_{t=1}^T \frac{1}{N}\sum_{i=1}^N  \fmt(i, x_\star^i) - \sum_{t=1}^T \frac{1}{N} \sum_{i=1}^N \fmt(i, x_t^i) \\
    & \leq \sum_{t=1}^T \frac{1}{N}\sum_{i=1}^N   \text{ucb}_{t-1}(i, x_\star^i) - \sum_{t=1}^T \frac{1}{N}\sum_{i=1}^N  \text{ucb}_{t-1}(i, x_t^i) + 2 \sum_{t=1}^T \frac{1}{N}\sum_{i=1}^N  \beta_{t-1}^i \sigma_{t-1}(i, x_t^i) \\
    & \leq 2 \sum_{t=1}^T \frac{1}{N}\sum_{i=1}^N  \beta_{t-1}^i \sigma_{t-1}(i, x_t^i) \\ 
    &\leq  2  \sum_{t=1}^T \beta_{t-1}^i \sigma_{t-1}(i_t, x_t^{i_t}) \underbrace{\frac{1}{N}\sum_{i=1}^N 1}_{=1}
\end{align*}
The first inequality holds since by assumption, for all tasks $i$, point $x$, and time $t$, we have $\fmt(i, x) \in [\,\mu_t(i,x) \pm \beta_t^i \cdot \sigma_t(i,x)\,]$. The second one follows since, at each round $t$ \texttt{MT-AL} select $x_t^i = \arg \max_x \text{ucb}_{t-1}(i, x)$ for all $i$, and the third one since $i_t = \arg\max_i \beta_{t-1}^i \sigma_{t-1}(i, x_t^i)$.
\end{proof}

\subsection{Proof of Corollary~\ref{cor:active_learning}}

\corolactivelearning*

\begin{proof}
First, since \textnormal{\texttt{MT-AL}} utilizes the MT regression estimates of Eq.~\eqref{eq:kernel_regression_mean}-\eqref{eq:kernel_regression_var} with parameters set according to Theorem~\ref{thm:main_thm_regrets}, with high probability $\fmt(i, x) \in [\,\mu_t(i,x) \pm \beta_t^i \cdot \sigma_t(i,x)\,]$ and the results of Theorem~\ref{thm:active_singlecluster} holds.
Then, the result follows since, according to the proof of Theorem~\ref{thm:main_thm_regrets}, for every sequence of revealed tasks $\{i_t\}_{t=1}^T$ (in particular the ones chosen by \texttt{MT-AL}), it holds  $2 \sum_{t=1}^T \beta_{t-1}^i \sigma_{t-1}(i_t, x_t^{i_t}) \leq \overline{\Rmt}(T)$, see Appendix~\ref{app:online_learning}. \end{proof}

\subsection{Comparison with \texttt{AE-LSVI}~\texorpdfstring{\cite{li2022near}}{li22}}\label{app:compare_AELSVI} 
The proposed \texttt{MT-AL} algorithm can be compared with the offline contextual Bayesian algorithm~{AE-LSVI}~\cite{li2022near} whose goal is to quickly discover the optimal strategy for each context (task, in our case). Like \texttt{MT-AL}, \texttt{AE-LSVI} selects strategy $x_t^i = \arg \max_x \text{ucb}_{t-1}(i, x)$ for each task $i$. However, unlike \texttt{MT-AL}, \texttt{AE-LSVI} queries the task $i_t = \arg\max_{i\in [N]} [\ucb_t(i,x_t^i) - \max_{x\in \X} \text{lcb}_t(i,x)]$. This, together with a terminal rule for the final reported actions $\{x_T^i\}_{i=1}^N$, ensures the latter are $\mathcal{O}(\beta_T\sqrt{\gamma_T^\text{mt}}/\sqrt{T})$-approximate optimal for each task.
We note that a similar error can also be proven when using \texttt{MT-AL}, by turning the active learning regret bound into a last-iterate approximation error (effectively dividing the regret by $T$). 
However, unlike our approach, it is unclear whether the active learning regret of \texttt{AE-LSVI} is sublinear. In particular, by querying the task with maximal uncertainty \texttt{MT-AL} controls the regrets for all the other tasks (see last inequality in Proof of~\Cref{thm:active_singlecluster}). Instead, the query strategy of \texttt{AE-LSVI} considers a truncated uncertainty, since $[\ucb_t(i,x_t^i) - \max_{x\in \X} \text{lcb}_t(i,x)] \leq [\ucb_t(i,x_t^i) - \text{lcb}_t(i,x_t^i)] = \beta_t^i\sigma_t(i, x_t^i)$.

\section{Additional experimental results}\label{app:additional_exps}

In Figure~\ref{fig:fig_additional_exps} we report additional synthetic experiments for different values of $N$ (number of tasks) and parameter $\delta$ (inversely proportional to the tasks' similarity, see~\Cref{sec:experiments}). We observe that the improvement of MT regression over independent learning increases with the number of tasks and with their similarity (i.e., for small $\delta)$. Moreover, as $N$ increases, the improved confidence intervals further improve over the naive ones, as well as the benefit of active learning compared to uniform sampling. All these considerations conform with our theory.
Experiments were run using 8 CPUs at 3.7 GHz.\looseness=-1

\begin{figure}[t]
        \centering
        \begin{subfigure}[b]{0.49\textwidth}
        \centering \text{\small Online learning}\\
        \vspace{0.1em}
\includegraphics[width=\textwidth]{figs/online_learning/legend.pdf}
        \end{subfigure}
        \begin{subfigure}[b]{0.49\textwidth}
        \centering \text{\small Active learning}\\
        \vspace{0.1em}
        \includegraphics[width=\textwidth]{figs/active_learning/legend.pdf}
        \end{subfigure}\\
        \vspace{1em}
        \hspace{-1em}
        \begin{subfigure}[b]{0.24\textwidth}
        \includegraphics[width=\textwidth]{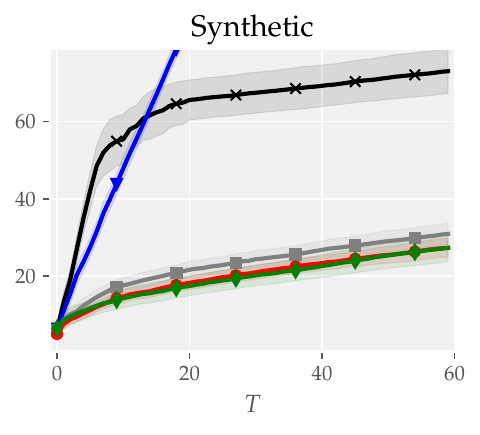}
                \vspace{-1.5em}
        \caption*{$N=3, \delta=0.2$}
        \end{subfigure}
        \begin{subfigure}[b]{0.24\textwidth}
        \includegraphics[width=\textwidth]{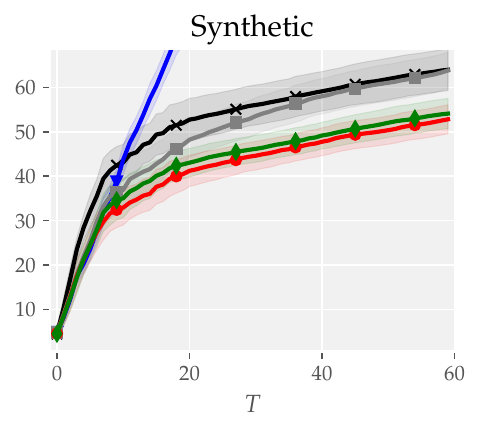}
        \vspace{-1.5em}
        \caption*{$N=3, \delta=0.4$}
        \end{subfigure}
        \hspace{0.5em}
        \begin{subfigure}[b]{0.24\textwidth}
        \includegraphics[width=\textwidth]{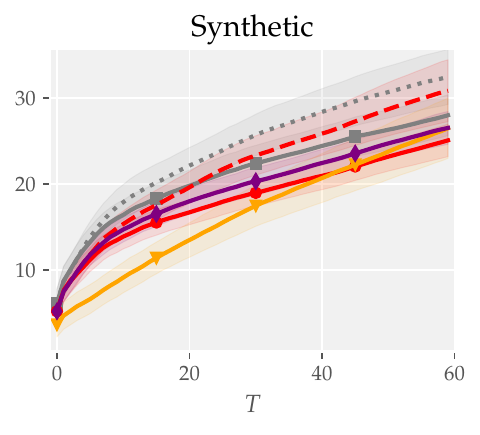}
                \vspace{-1.5em}
        \caption*{$N=3, \delta=0.2$}
        \end{subfigure}
        \begin{subfigure}[b]{0.24\textwidth}
        \includegraphics[width=\textwidth]{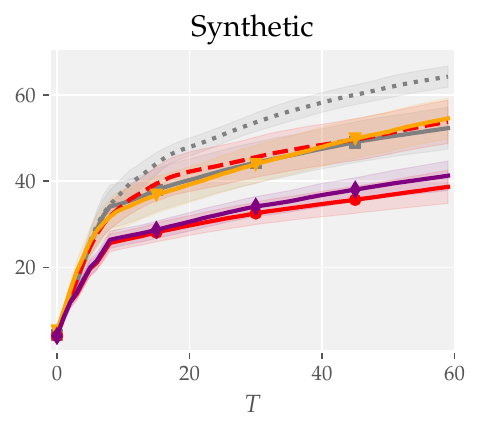}
                \vspace{-1.5em}
        \caption*{$N=3, \delta=0.4$}
        \end{subfigure}
        \\
        \vspace{1em}
       \begin{subfigure}[b]{0.24\textwidth}
        \includegraphics[width=\textwidth]{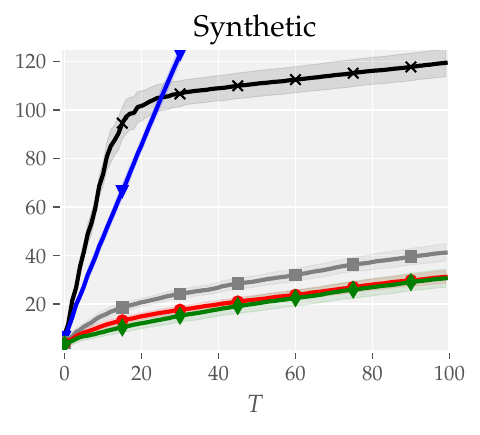}
        \vspace{-1.5em}
        \caption*{$N=5, \delta=0.2$}
        \end{subfigure}
        \begin{subfigure}[b]{0.24\textwidth}
        \includegraphics[width=\textwidth]{figs/online_learning/fig_N_5_d_4_dev_0.4_R_1.0_b_0.05.pdf}
        \vspace{-1.5em}
        \caption*{$N=5, \delta=0.4$}
        \end{subfigure}
        \hspace{0.5em}
        \begin{subfigure}[b]{0.24\textwidth}
        \includegraphics[width=\textwidth]{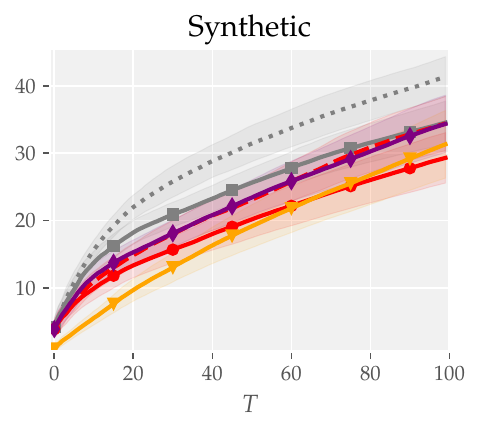}
                \vspace{-1.5em}
        \caption*{$N=5, \delta=0.2$}
        \end{subfigure}
        \begin{subfigure}[b]{0.24\textwidth}
        \includegraphics[width=\textwidth]{figs/active_learning/fig_N_5_d_4_dev_0.4_R_1.0_b_0.05.pdf}
                \vspace{-1.5em}
        \caption*{$N=5, \delta=0.4$}
        \end{subfigure}
       \\
        \vspace{1em}
         \begin{subfigure}[b]{0.24\textwidth}
        \includegraphics[width=\textwidth]{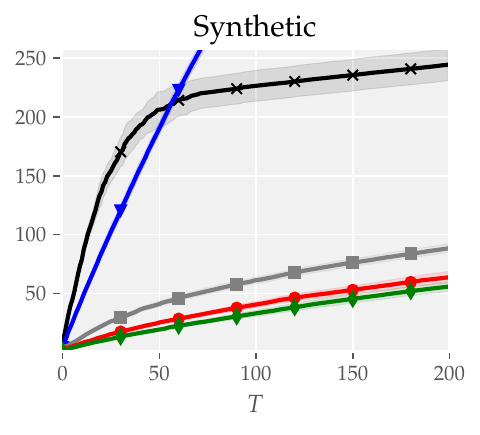}
                \vspace{-1.5em}
        \caption*{$N=10, \delta=0.2$}
        \end{subfigure}
        \begin{subfigure}[b]{0.24\textwidth}
        \includegraphics[width=\textwidth]{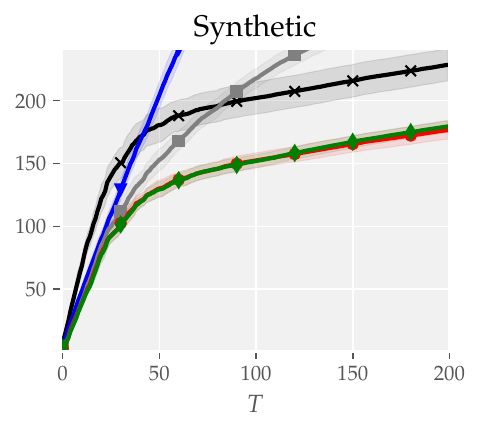}
                \vspace{-1.5em}
        \caption*{$N=10, \delta=0.4$}
        \end{subfigure}
        \begin{subfigure}[b]{0.24\textwidth}
        \includegraphics[width=\textwidth]{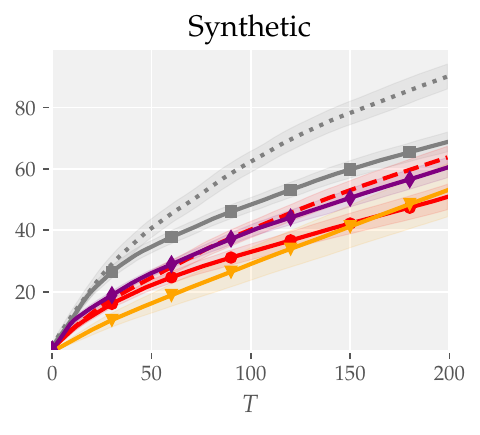}
                \vspace{-1.5em}
        \caption*{$N=10, \delta=0.2$}
        \end{subfigure}
        \hspace{0.5em}
        \begin{subfigure}[b]{0.24\textwidth}
        \includegraphics[width=\textwidth]{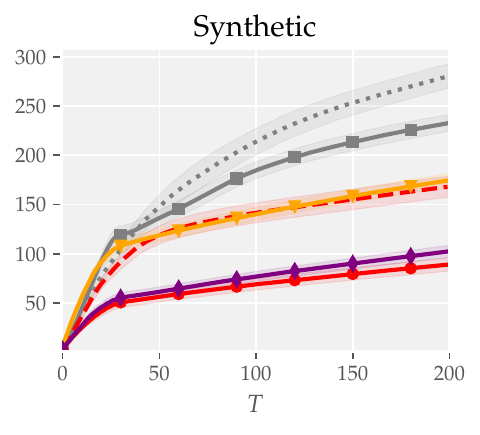}
            \vspace{-1.5em}
        \caption*{$N=10, \delta=0.4$}
        \end{subfigure}
        \caption{Online (left) and active (right) learning regrets of synthetic experiments for different parameters: 
    $N$ is the number of tasks while parameter $\delta$ is inversely proportional to the tasks' similarity, see~\Cref{sec:experiments} for how task vectors are generated.}
    \label{fig:fig_additional_exps}
    \end{figure}

\end{document}